\newtheorem{assumption}{Assumption}
\newtheorem{thm}{Theorem}
\newtheorem{lem}{Lemma}
\newtheorem{rmk}{Remark}
\newtheorem{prop}{Proposition}
\title{Investigating Self-Supervised Image Denoising with Denaturation}
\author[1]{Hiroki Waida\footnote{H.~Waida did this work while he was an internship student at Fujitsu Ltd.}}
\author[2]{Kimihiro Yamazaki}
\author[3]{Atsushi Tokuhisa}
\author[2]{Mutsuyo Wada}
\author[2,4]{Yuichiro Wada\footnote{Corresponding author. E-mail: wada.yuichiro@jp.fujitsu.com}}
\affil[1]{\normalsize
            Department of Mathematical and Computing Science, Institute of Science Tokyo,
            2-12-1 Ookayama, Meguro-ku, 
            Tokyo,
            152-8550, 
            Japan
            }
\affil[2]{\normalsize
            Fujitsu Limited,
            4-1-1 Kamikodanaka, Nakahara-ku, Kawasaki-shi, 
            Kanagawa,
            211-8588, 
            Japan
            }
\affil[3]{\normalsize
            RIKEN Center for Computational Science,
            7-1-26 Minatojima-minami-machi, Chuo-ku, Kobe, 
            Hyogo,
            650-0047,
            Japan
            }
\affil[4]{\normalsize
            RIKEN Center for Advanced Intelligence Project,
            Nihonbashi 1-chome Mitsui Building, 15th floor, 1-4-1 Nihonbashi, Chuo-ku, 
            Tokyo,
            103-0027, 
            Japan
            }
\date{}
\begin{document}

\maketitle

\begin{abstract}
Self-supervised learning for image denoising problems in the presence of denaturation for noisy data is a crucial approach in machine learning. However, theoretical understanding of the performance of the approach that uses denatured data is lacking. To provide better understanding of the approach, in this paper, we analyze a self-supervised denoising algorithm that uses denatured data in depth through theoretical analysis and numerical experiments. Through the theoretical analysis, we discuss that the algorithm finds desired solutions to the optimization problem with the population risk, while the guarantee for the empirical risk depends on the hardness of the denoising task in terms of denaturation levels. We also conduct several experiments to investigate the performance of an extended algorithm in practice. The results indicate that the algorithm training with denatured images works, and the empirical performance aligns with the theoretical results. These results suggest several insights for further improvement of self-supervised image denoising that uses denatured data in future directions. 
\end{abstract}

\section{Introduction}
\label{sec:introduction}
Image denoising is a task to predict the clean image from an observation perturbed by some random noise, which is recognized as an important topic for decades~\cite{dabov2007image,zhang2017beyond,lehtinen2018noise}.
\cite{zhang2017beyond,lehtinen2018noise} provide significant improvement for this task by making use of the following components: \cite{zhang2017beyond} show the power of deep learning for supervised denoising tasks, and \cite{lehtinen2018noise} focus on a strategy that learns noise patterns from multiple pairs of images whose clarity of the signals may not be necessarily high enough.
After \cite{lehtinen2018noise} has presented this self-supervised strategy, the combination of these components has been actively studied: for instance, \cite{krull2019noise,huang2021neighbor} proposed alternative techniques for generating a pair of noisy images from a single observation, and \cite{quan2020self,mansour2023zero} introduced denoising methods that learn the noise pattern from a test image itself in a self-supervised manner.
The method of~\cite{lehtinen2018noise} is theoretically well-investigated in~\cite{zhussip2019extending,huang2021neighbor}.
\cite{zhussip2019extending} show that the empirical method studied by~\cite{lehtinen2018noise} implicitly utilizes a theoretical relation between self-supervised denoising and supervised denoising.
\cite{huang2021neighbor} also consider theory on the method of~\cite{lehtinen2018noise} under general assumptions for noise distributions.

From practical viewpoints, denoising of images collected from a device is a useful technique before handling other operations using the data. For instance, Cryo-Electron Microscopy (Cryo-EM) collects 2D projection images of target biomolecules~\cite{Earl2017-if}.
As~\cite{Lyumkis2019-ds} point out, those images are not only noisy but also denatured due to an electron beam. It is important to denoise from the 2D projection images, since it is shown that the denoised Cryo-EM images can help to reconstruct 3D structures of the biomolecules~\cite{scheres2012relion,punjani2017cryosparc}.
We note that the reconstructed structures have a potential to lead biological significance, e.g., a biological finding to develop anti-cancer drugs~\cite{Yuan2022structural}. As another example, Magnetic Resonance Imaging (MRI) collects noisy images with kind of denaturation, and the corresponding self-supervised denoising problems are investigated in~\cite{xu2021deformed,gan2022deformation}.
Considering this background, understanding the ability of self-supervised denoising using denatured images is a crucial topic.

On the other hand, the seminal work by~\cite{lehtinen2018noise} designed a statistical modeling for a noisy image whose conditional expectation with respect to the clean counterpart coincides with the clean one. Therefore, the modeling of~\cite{lehtinen2018noise} cannot apply to the situation in which the clean image can be denatured by some transformation.
In the field of self-supervised denoising, several works by~\cite{ehret2019joint,gan2022deformation} have shown the empirical advantages of their proposed methods for noisy images with denaturation.
However, it is not theoretically investigated whether some method based on the methodology of~\cite{lehtinen2018noise} can deal with denoising problems in the presence of some general and unknown transformation to training images.
Addressing this issue is important to fully utilize the methodology of~\cite{lehtinen2018noise} under general settings.

In this paper, we address the above issue from both the theoretical and empirical viewpoints.
We study a self-supervised denoising algorithm for estimating the fixed target image of high-resolution from denatured training images contaminated by noise.
In particular, our approach focuses on a simple extension from the framework called \emph{Noise2Noise}~\cite{lehtinen2018noise} to facilitate learning by making use of the common features among the denatured images.
The contributions of this paper are summarized below:
\begin{itemize}
    \item We show that similarly to the theoretical results presented in~\cite{zhussip2019extending,huang2021neighbor}, the algorithm extended from~\cite{lehtinen2018noise} has the theoretical guarantees for a population loss minimizer. 
    \item In addition, we derive a statistical guarantee for an empirical risk minimizer. The guarantee holds under the non-asymptotic statistical theory, which has not been addressed yet in the context of self-supervised denoising. The result reveals both the quantitative performance and limitations of the algorithm.
    \item Building on the theoretical results, we instantiate the self-supervised denoising algorithm termed \textit{Denatured-Noise2Noise} (DN2N) to investigate the empirical aspects of the algorithm.
    In the experiment that uses toy datasets with denaturation and noise, we confirm the consistency between the theoretical analysis and empirical performance of DN2N. We also show efficiency of DN2N for the following three benchmark datasets: MRI, Cryo-EM, and Fluorescence Microscopy (FM)~\cite{lichtman2005fm} image datasets. 
\end{itemize}

\section{Preliminaries}
\label{sec:self-supervised denoising for denatured noisy images}
We begin by formulating the problem setting and reviewing some background on self-supervised denoising.

\subsection{Problem Setting}
\label{subsec:problem setting}
The aim of this study is to analyze a self-supervised image denoising algorithm extended from the basic framework of~\cite{lehtinen2018noise} to leverage denatured noisy images.
To this end, we present the fundamental problem setting in this paper.

Let $(\Omega,P)$ be a probability space, and let $\tau:\Omega\to(0,T]$ be a random variable indicating time. Let $\bm{y}:\Omega\times[0,T]\to\mathbb{R}^{d}$ be a continuous-time stochastic process.
This stochastic process represents the time-series of denatured noisy images.
We aim to make use of information derived from the denatured images to learn a neural-network-based model $f$ to predict as $f(\bm{y}_{0})\approx \bm{x}_{0}$, where $\bm{x}_{0}$ is
unknown during both the training and test phases.
We note that we consider the fixed design for the estimation problem: in fact, the target image $\bm{x}_{0}$ is fixed throughout the setting, and the training images are utilized to estimate the realization $\bm{x}_{0}$.

The difficulty of this setup arises from the condition that the sequence is denatured, meaning that for a mapping $\phi: \mathbb{R}^{d}\times [0,T]\to\mathbb{R}^{d}$, the clean image $\bm{x}_{t}$ corresponding to $\bm{y}_{t}$ is represented by $\phi_{t}(\bm{x}_{0})$, where note that $\phi_{0}(\bm{x}_{0})=\bm{x}_{0}$ (the formal definition is introduced in Assumption~\ref{assumption:assumptions} in Section~\ref{subsec:a guarantee for population minimizers}).
For instance, when we choose to model the noise $\bm{\varepsilon}_{t}$ as the multivariate standard normal distribution, the statistical modeling for $\bm{y}_{t}$ may be written as
\begin{align}
\label{eq:statistical modeling}
\bm{y}_{t}=\phi_{t}(\bm{x}_{0})+\bm{\varepsilon}_{t}\quad\forall t\in [0,T].
\end{align}
When $T=0$, \eqref{eq:statistical modeling} is reduced to a modeling studied by~\cite{lehtinen2018noise}.
Besides, the modeling recovers that of~\cite{xu2021deformed} when $\phi_{t}$ represents the deformation transforms for images.
From statistical viewpoints, this problem may be resolved by estimating $\phi_t$ directly under the regression modeling.
Unfortunately, this strategy cannot apply to our setting since the clean data $\bm{x}_{0}$ is unknown as in the literature on self-supervised denoising~\cite{lehtinen2018noise,quan2020self,xu2021deformed}.

\subsection{Background from Technical Viewpoints}
\label{subsec:theoretical guarantees of denoising for denatured noisy images}
We next discuss some background of the previous self-supervised denoising methods to see what to be investigated for achieving our goal.

Several self-supervised denoising methods using many images have been proposed in a line of research~\cite{lehtinen2018noise,krull2019noise,batson2019noise,moran2020noisiser,huang2021neighbor,pang2021recorrupted,wang2022blind,lee2022apbsn,vaksman2023patch,wang2023noise2info,pan2023random}.
Several existing frameworks proposed by~\cite{krull2019noise,batson2019noise,huang2021neighbor} are built on a method proposed by~\cite{lehtinen2018noise}.
Formally, \cite{lehtinen2018noise} formulate their framework as minimization of the objective 
\begin{align*}
\mathbb{E}_{\bm{y},\bm{y}'}[\|g(\bm{y})-\bm{y}'\|_{2}^{2}],
\end{align*}
where $\bm{y}$ and $\bm{y'}$ are random noisy images, and $g$ is a function that predicts the clean image $\bm{x}_{0}$.
Note that~\cite{lehtinen2018noise} consider a more general loss function including the above one in their method (see Eq.(2) in~\cite{lehtinen2018noise}), while we primarily focus on the $\ell_{2}$ distance in our work.
Here, let us rewrite the objective function of~\cite{lehtinen2018noise} as the expectation of conditional expectation for $\bm{x}_{0}$, that is, 
\begin{align*}
\mathbb{E}_{\bm{y},\bm{y}'}[\|g(\bm{y})-\bm{y}'\|_{2}^{2}]=\mathbb{E}_{\bm{x}_{0}}[\mathbb{E}_{\bm{y},\bm{y}'}[\|g(\bm{y})-\bm{y}'\|_{2}^{2}|\bm{x}_{0}]].
\end{align*}
Let us focus on the loss $\mathbb{E}_{\bm{y},\bm{y}'}[\|g(\bm{y})-\bm{y}'\|_{2}^{2}|\bm{x}_{0}]$.
In fact, Theorem~1 in~\cite{huang2021neighbor} implies that under some mild conditions, this loss function has the learning theory guaranteeing the equality
\begin{align*}
    \mathbb{E}_{\bm{y},\bm{y}'}[\|g(\bm{y})-\bm{y}'\|_{2}^{2}|\bm{x}_{0}]=
    \mathbb{E}_{\bm{y}}[\|g(\bm{y})-\bm{x}_{0}\|_{2}^{2}|\bm{x}_{0}].
\end{align*}
However, it is pointed out by~\cite{krull2019noise} that preparing pairs of random variables $(\bm{y},\bm{y}')$ is not realistic in practical situations.
We also face an issue similar to~\cite{krull2019noise} when attempting to apply the method of~\cite{lehtinen2018noise} directly to our setting, since we consider a case that the sufficient amount of such pairs may not be available.

If we substitute $\bm{y}'$ for a denatured noisy image $\bm{y}_{\tau}$ whose clean image is represented as $\phi_{\tau}(\bm{x}_{0})$, then following a similar idea to~\cite{lehtinen2018noise}, we can define the loss $\mathbb{E}[\|g(\bm{y}_{0})-\bm{y}_{\tau}\|_{2}^{2}|\bm{x}_{0}]$.
A variant of this loss has already been employed in the context of biological image denoising~\cite{bepler2020topaz}.
Recall that \cite[Theorem~1]{huang2021neighbor} imply under some conditions that optimizing the loss is equal to minimizing $\mathbb{E}[\|g(\bm{y}_{0})-\phi_{\tau}(\bm{x}_{0})\|_{2}^{2}|\bm{x}_{0}]$.
This implies that learning with this loss function is affected by the changes in the denatured images, which results in worsening the quality of prediction after the stage of training.
This loss of information also makes it hard to apply a recent approach explored by~\cite{chen2023multi} to this setting.

We also mention that single image denoising method have also been proposed in~\cite{lempitsky2018deep,quan2020self,lequyer2022noise,ta2022poisson,mansour2023zero}.
\cite{quan2020self} propose a self-supervised method that uses only the noisy image $\bm{y}_{0}$ to learn denoising models.
\cite{lequyer2022noise} propose a fast method with a theoretical background.
However, \cite{lequyer2022noise} builds their theory on the assumption that a noisy image is defined as the summation of a clean image and some additive random variable.
Meanwhile, we aim to investigate the performance of self-supervised denoising under more general settings.
\cite{mansour2023zero} also improve the running time by considering to leverage the framework of~\cite{lehtinen2018noise}.
Note that different from~\cite{mansour2023zero}, we study the properties of the methodology of~\cite{lehtinen2018noise} in a scenario where denatured images are available for training.

\section{A Time-Aware Denoising Loss}
\label{sec:denatured-noise2noise algorithm}
Motivated by the discussion in Section~\ref{subsec:theoretical guarantees of denoising for denatured noisy images}, we consider to utilize denatured noisy image to perform self-supervised denoising without loss of much information from $\bm{x}_{0}$.
Following a similar idea to~\cite{xu2021deformed,ehret2019joint,gan2022deformation}, we consider to extend a framework of~\cite{lehtinen2018noise} to suppress the loss of information due to the general mapping $\phi_{t}$ and leverage the remained information for enhancing training.
As we will see in the later section, this approach also enables to ensure a theoretical guarantee for training (see Section~\ref{sec:theoretical analysis}).

\subsection{Definition}
\label{subsec:definition}
The basic strategy of our analysis is to introduce additional time variable to a self-supervised denoising framework.
Let $f:\mathbb{R}^{d}\times[0,T]\to\mathbb{R}^{d}$ be a training model, where we aim to train $f$ such that $f(\bm{y}_{0},\tau)\approx\phi_{\tau}(\bm{x}_{0})$.
We first focus on the following self-supervised learning objective extended from~\cite{lehtinen2018noise}:
\begin{align}
\label{eq:time-aware denoising model}
\mathbb{E}_{\bm{y},\tau}[\|f(\bm{y}_{0},\tau)-\bm{y}_{\tau}\|_{2}^{2}|\bm{x}_{0}].
\end{align}
The objective function~\eqref{eq:time-aware denoising model} is a simple extension from the framework called \emph{Noise2Noise}~\cite{lehtinen2018noise}, where we add time information due to our problem setting.
In fact, when $f$ is defined as standard neural networks, the last entry of the vector $(\bm{y}_{0},0)$ should be ignored in the matrix operation of the first layer.
Therefore, if $T=0$ holds as a special case, then the loss~\eqref{eq:time-aware denoising model} recovers the original loss function proposed in~\cite{lehtinen2018noise}.

After training, we predict the target clean image $\bm{x}_{0}$ by letting $t=0$ in its input, namely $f(\bm{y}_{0},0)$.
Intuitively, the denatured information in the images $\bm{y}_{t}$, $t\in (0,T]$ are used for learning how to trace back to $\bm{x}_{0}$ from the denatured but clean image $\phi_{t}(\bm{x}_{0})$.
We note that a similar setting for denoising tasks is considered in~\cite{mildenhall2018burst}, where \cite{godard2018deep,ehret2019joint,dudhane2022burst} also study utilization of deep learning in the same context as~\cite{mildenhall2018burst}.
One of the notable strengths of the extended approach in \eqref{eq:time-aware denoising model} is that we can make use of denatured images directly without combining additional techniques.
In fact, let us recall that \cite{krull2019noise} use some operations such as some data augmentation technique, and \cite{huang2021neighbor} introduce a method to construct new images by drawing pixels randomly from the noisy image.
Whereas in~\eqref{eq:time-aware denoising model}, by utilizing information of similarity existing in the sequence of images $\{\bm{y}_{t}\}$ instead, we can reduce the cost drastically.

As a remark, the loss function~\eqref{eq:time-aware denoising model} is related to some losses proposed in several previous works~\cite{xu2021deformed,ehret2019joint,gan2022deformation,ho2020denoising,xie2023diffusion}.
We discuss this point in Section~\ref{subsec:comparison with related work}.

\subsection{Relations to the Previous Work}
\label{subsec:comparison with related work}
To the best of our knowledge, among the existing literature on denoising using sequences of noisy images, the work by~\cite{xu2021deformed} is closely relevant to our problem setting.
\cite{xu2021deformed} utilize a framework referred to as \emph{Self2Self}~\cite{quan2020self} to denoise deformed noisy images and then correct deformation of the predicted images by applying a sampling technique for grids in images.
\cite{xu2021deformed} empirically verify that the performance of their method overwhelms that of~\cite{quan2020self}.
However, the theoretical analysis showing what will be reconstructed as the final output from the model trained according to the method of~\cite{xu2021deformed} is an open problem.
We note that the target of our work is the methodology of \emph{Noise2Noise}~\cite{lehtinen2018noise}, different from~\cite{xu2021deformed}.
Moreover, \cite{xu2021deformed} investigate in a setting that underlying clean images are deformed.
Meanwhile, in this paper, we address a broader problem by treating $\phi_{t}$ as a general mapping, which can be non-linear transformations.

We also discuss the previous works~\cite{ehret2019joint,gan2022deformation} related to our approach.
\cite{ehret2019joint} introduce a method to learn how to remove both mosaic and noise components simultaneously, where their approach minimizes a norm function so that a model can learn how to transform an image into others.
The main difference from~\cite{ehret2019joint} is that we additionally input the time variable for the model $f$.
\cite{gan2022deformation} tackle the self-supervised denoising with deformed noisy medical images by extending a framework of~\cite{lehtinen2018noise}.
However, \cite{gan2022deformation} mainly focus on the empirical aspects of their method.

Here we also note that~\eqref{eq:time-aware denoising model} has similarity to the method called \emph{Denoising Diffusion Probabilistic Model}~\cite{ho2020denoising}.
\cite{ho2020denoising} utilize the mechanism of denoising to generate high quality images.
Meanwhile, \eqref{eq:time-aware denoising model} has the following main difference in its mechanism from the loss of~\cite{ho2020denoising}: \eqref{eq:time-aware denoising model} is built on the idea that the reconstruction $f(\bm{y}_{0},\tau)$ is made close to the denatured noisy image $\bm{y}_{\tau}$, while \cite{ho2020denoising} consider to fill the gap between prediction and noise itself.
Note that the combination of MRI denoising and the generative model of~\cite{ho2020denoising} is also recently paid attention in~\cite{xiang2023ddm}.

The authors of~\cite{xie2023diffusion} build on the above generative model of~\cite{ho2020denoising} to propose a method for a supervised image denoising task (see, e.g.,~\cite{zhang2017beyond} for supervised image denoising). In~\cite{xie2023diffusion}, a denoiser $f$ parameterized by a deep neural network is trained by minimizing $\|f(\widetilde{\bm{x}}_t, t) - \bm{x}_0\|_2^2$ w.r.t. $f$, where $\bm{x}_0$ is a clean image, and $\widetilde{\bm{x}}_t$ is the $t$-th noise-contaminated image defined from $\bm{x}_0$, which is based on the way referred to as \emph{forward process}~\cite{ho2020denoising}, and after the training, the predicted clean image for the noisy one is computed based on the approach referred to as \emph{reverse process}~\cite{ho2020denoising}.
The main differences of the loss~\eqref{eq:time-aware denoising model} to that of~\cite{xie2023diffusion} are that we consider the self-supervised learning setting where the clean image is unavailable, and in our setting, the denatured image $\bm{y}_{t}$ is observed as data, not sampled by the approach of~\cite{ho2020denoising}.

\section{Theoretical Analysis}
\label{sec:theoretical analysis}
A question to the previous section is whether the framework has some theoretical guarantee.
In this section, we first give a positive answer to this question.
Note that in this section, the random variables $\{\bm{y}_{0,j}\}_{j=0}^{M}$, $\{\tau_{i}\}_{i=0}^{N}$, and $\{\bm{y}_{t,j}\}_{j=0}^{M}$ are identically distributed within each sequence of random variables, and all the random variables are independent, where $\bm{y}_{0,0}=\bm{y}_{0},\tau_{0}=\tau$, and $\bm{y}_{t,0}=\bm{y}_{t}$.
Furthermore, the clean image $\bm{x}_{0}$ is fixed.

\subsection{On Population Risk Minimizers}
\label{subsec:a guarantee for population minimizers}
The theoretical guarantee of the time-aware denoising algorithm is mainly based on the following conditions.
\begin{assumption}[Assumptions on data]
\label{assumption:assumptions}
The following conditions hold:
\begin{enumerate}
    \item[\textbf{\textup{(A)}}] It holds that $\mathbb{E}[\bm{y}_{t}|\bm{x}_{0}]=\phi_{t}(\bm{x}_{0})$ for any $t\in[0,T]$.
    
    \item[\textbf{\textup{(B)}}] For any $\bm{x}\in\mathbb{R}^{d}$, the function $\phi_{t}(\bm{x})$ in variable $t$ is right-continuous at $t=0$.
   
    \item[\textbf{\textup{(C)}}] For any $t,t'\in[0,T]$ satisfying $t\neq t'$, $\bm{y}_{t}$ and $\bm{y}_{t'}$ are independent if they are conditioned on $\bm{x}_{0}$.
\end{enumerate}
\end{assumption}
Note that the condition \textbf{\textup{(A)}} of Assumption~\ref{assumption:assumptions} extends assumptions considered in~\cite{lehtinen2018noise,huang2021neighbor} to a setting where we need to deal with the mapping $\phi_{t}$ together.
This condition means that the noisy image $\bm{y}_{t}$ observed at each time $t$ is denatured by some mapping $\phi_{t}$ and perturbed by some noise.
Note that if we only consider the case that $T=0$ and $\phi_{0}$ is the identity mapping, then this modeling can recover the conditions studied in~\cite{lehtinen2018noise,huang2021neighbor}.
Also note that the condition \textbf{\textup{(A)}} of Assumption~\ref{assumption:assumptions} includes the modeling considered in~\cite{xu2021deformed} as a special case.
Based on these conditions, similar to usual least-squares problems (see e.g.~\cite{hastie2009elements}), we can ensure that minimization for~\eqref{eq:time-aware denoising model} yields the desired result.

\begin{prop}
\label{thm:guarantee for population minimizer}
Suppose that all the conditions in Assumption~\ref{assumption:assumptions} hold.
Suppose also that a minimizer $f^{*}$ of the objective function~\eqref{eq:time-aware denoising model} over all measurable mapping from $\mathbb{R}^{d}\times [0,T]$ to $\mathbb{R}^{d}$ meets the condition that $f^{*}(\widetilde{\bm{y}}_{0},t)$ is right-continuous at $t=0$ for almost every $\widetilde{\bm{y}}_{0}\in\mathbb{R}^{d}$ with its probability distribution.
Then, the minimizer $f^{*}$ meeting the conditions above satisfies
\begin{align}
    \mathbb{E}_{\bm{y}_{0}}[f^{*}(\bm{y}_{0},0)|\bm{x}_{0}]=\bm{x}_{0}.
\end{align}
\end{prop}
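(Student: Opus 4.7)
The plan is to reduce the optimization to a pointwise conditional-expectation problem, use the conditional independence in Assumption~\ref{assumption:assumptions}\textbf{\textup{(C)}} to remove the dependence on $\bm{y}_{0}$ inside the optimal predictor, and then push the identity from $t>0$ down to $t=0$ via the two right-continuity hypotheses.

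First, I would condition on $\bm{x}_{0}$ and on $\tau$ and apply the classical least-squares result: for each fixed $t$ in the support of $\tau$, the map $\bm{y}_{0}\mapsto f^{*}(\bm{y}_{0},t)$ must coincide, almost surely with respect to the conditional law of $\bm{y}_{0}$ given $\bm{x}_{0}$, with the conditional expectation
\begin{align*}
f^{*}(\bm{y}_{0},t)=\mathbb{E}[\bm{y}_{t}\mid\bm{y}_{0},\tau=t,\bm{x}_{0}].
\end{align*}
Since $\tau$ is an independent time index (independent of the $\bm{y}$-process by the setup of Section~\ref{subsec:problem setting}) and since, for $t>0$, Assumption~\ref{assumption:assumptions}\textbf{\textup{(C)}} gives the conditional independence of $\bm{y}_{t}$ and $\bm{y}_{0}$ given $\bm{x}_{0}$, the right-hand side collapses to $\mathbb{E}[\bm{y}_{t}\mid\bm{x}_{0}]$. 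Assumption~\ref{assumption:assumptions}\textbf{\textup{(A)}} then identifies this as $\phi_{t}(\bm{x}_{0})$. Hence for every $t\in(0,T]$ in the support of $\tau$ and for almost every $\bm{y}_{0}$,
\begin{align*}
f^{*}(\bm{y}_{0},t)=\phi_{t}(\bm{x}_{0}).
\end{align*}

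Next I would extend this equality to $t=0$. Because $\tau$ takes values in $(0,T]$, the optimization itself does not pin down $f^{*}(\cdot,0)$, which is the obstacle the right-continuity hypothesis is designed to overcome. Taking a sequence $t_{n}\downarrow 0$ from the support of $\tau$, the assumed right-continuity of $t\mapsto f^{*}(\widetilde{\bm{y}}_{0},t)$ at $t=0$ for almost every $\widetilde{\bm{y}}_{0}$ gives
\begin{align*}
f^{*}(\bm{y}_{0},0)=\lim_{n\to\infty}f^{*}(\bm{y}_{0},t_{n})=\lim_{n\to\infty}\phi_{t_{n}}(\bm{x}_{0})=\phi_{0}(\bm{x}_{0})=\bm{x}_{0},
\end{align*}
where the second limit uses Assumption~\ref{assumption:assumptions}\textbf{\textup{(B)}} and the last equality uses $\phi_{0}(\bm{x}_{0})=\bm{x}_{0}$ stated in Section~\ref{subsec:problem setting}. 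Taking conditional expectation in $\bm{y}_{0}$ given $\bm{x}_{0}$ then yields the claim.

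The main subtlety I expect is the interplay between the ``almost everywhere'' qualifiers. The pointwise-in-$t$ minimizer is only determined up to a $P_{\bm{y}_{0}|\bm{x}_{0}}$-null set, and the support of $\tau$ does not include $0$, so without care one might end up with a null-set-dependent choice that spoils right-continuity. The cleanest way to handle this is to fix a countable dense sequence $t_{n}\downarrow 0$ inside the support of $\tau$, exclude the countable union of null sets on which the identification fails, and argue on the resulting full-measure set where right-continuity of $f^{*}(\widetilde{\bm{y}}_{0},\cdot)$ is simultaneously available. Everything else is routine bookkeeping.
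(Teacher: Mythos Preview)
Your proposal is correct and follows essentially the same approach as the paper: identify the pointwise minimizer as the conditional expectation $\mathbb{E}[\bm{y}_{t}\mid \bm{y}_{0},\tau=t]$, collapse it to $\phi_{t}(\bm{x}_{0})$ via Assumptions~\ref{assumption:assumptions}\textbf{(A)} and \textbf{(C)}, then pass to $t=0$ using \textbf{(B)} together with the right-continuity of $f^{*}$. Your explicit handling of the null-set and support-of-$\tau$ issues (choosing a countable $t_{n}\downarrow 0$ and taking a countable union of null sets) is in fact more careful than the paper's version, which simply writes $\lim_{t\downarrow 0}$ without discussing these points.
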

Proposition~\ref{thm:guarantee for population minimizer} extends the theory on \emph{Noise2Noise}~\cite{lehtinen2018noise} shown in Section~3.1 of~\cite{zhussip2019extending} and Theorem~1 of \cite{huang2021neighbor} to a setting that observed data may contain denaturation.
The proof of Proposition~\ref{thm:guarantee for population minimizer} is deferred to~\ref{appsubsec:proof of proposition 1}.

\subsection{On Empirical Risk Minimizers}
\label{subsec:on empirical risk minimizers}
Once obtaining the theoretical justification for the population minimizer, it is also important to investigate how close the prediction $\widehat{\bm{x}}_{0}$ defined with the empirical risk minimizer and the true clean $\bm{x}_{0}$ are.
We address this problem by evaluating the Euclidean norm between them.
To present the result, let us introduce the empirical loss
\begin{equation}
\label{eq:empirical time-aware loss}
    \frac{1}{MN}\sum_{j=1}^{M}\sum_{i=1}^{N}\|f(\bm{y}_{0,j},\tau_{i})-\bm{y}_{\tau_{i},j}\|_{2}^{2}.
\end{equation}
Let $\widehat{f}$ be the Empirical Risk Minimizer (ERM) of~\eqref{eq:empirical time-aware loss} over a class of Lipschitz functions, where note that we select this class since the learning theory of Lipschitz spaces is well-investigated in~\cite{von2004distance}.
We define the predicted image as $\widehat{\bm{x}}_{0}=\frac{1}{M}\sum_{j=1}^{M}\widehat{f}(\bm{y}_{0,j},0)$.
Then, the following evaluation holds:
\begin{thm}
\label{thm:non-asymptotic theory}
Let $U= [0,1]^{d}$, $B_{1}=\sqrt{d}$, and $L>0$.
Let $\mathcal{F}=\{f:U\times[0,1]\to \mathbb{R}^{d}|f(\bm{0},0)=\bm{0}, f=(f_{1},\cdots,f_{d}), f_{1},\cdots,f_{d}\textup{ are }L\textup{-Lipschitz}\}$, where $f_{k}$ is $L$-Lipschitz in the sense that 
$
|f_{k}(\bm{u}_{1},t_{1})-f_{k}(\bm{u}_{2},t_{2})|\leq L\sqrt{\|\bm{u}_{1}-\bm{u}_{2}\|_{2}^{2}+|t_{1}-t_{2}|^{2}}
$ 
holds for any $\bm{u}_{1},\bm{u}_{2}\in U$ and $t_{1},t_{2}\in[0,1]$.
Suppose that there exists $\widehat{f}\in\mathcal{F}$ such that $\widehat{f}$ is the ERM~\eqref{eq:empirical time-aware loss} over $\mathcal{F}$.
Suppose also that there exists some $B_{2}\geq 0$ such that for every $i\in\{1,\cdots,N\}$ and $j\in\{1,\cdots,M\}$, $\|\bm{y}_{\tau_{i},j}\|_{2}\leq B_{2}$ holds almost surely.
We further assume that $\bm{y}_{\tau_{1},j},\cdots,\bm{y}_{\tau_{N},j}$ are i.i.d. for each $j\in\{1,\cdots,M\}$.
Let $\delta>Me^{-N}$, and $B=(L\sqrt{B_{1}^{2}+1}+B_{2})^{2}$.
Then, under Assumption~\ref{assumption:assumptions}, there exists a constant $C>0$ that is independent of $M,N,\delta$ such that with probability at least $1-4\delta$, it holds for the prediction $\widehat{\bm{x}}_{0}=\frac{1}{M}\sum_{j=1}^{M}\widehat{f}(\bm{y}_{0,j},0)$ that
\begin{align*}
    \|\widehat{\bm{x}}_{0}-\bm{x}_{0}\|_{2}^{2}&\leq 
    C(E_{\mathcal{F}}+G_{\phi}+L^{2}\mathbb{E}[\tau^{2}]+N^{-\frac{1}{2}}+\sqrt{(M \wedge N)^{-1}\log(M/\delta)}),
\end{align*}
where $E_{\mathcal{F}}=\inf_{f^{*}\in\mathcal{F}}\mathbb{E}[\|f^{*}(\bm{y}_{0},\tau)-\bm{y}_{\tau}\|_{2}^{2}]$, and $G_{\phi}=\|\mathbb{E}_{\tau}[\phi_{\tau}(\bm{x}_{0})]-\bm{x}_{0}\|_{2}^{2}$.
\end{thm}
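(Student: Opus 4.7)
The plan is to telescope $\widehat{\bm{x}}_{0}-\bm{x}_{0}$ through four intermediate quantities, each controlled by a different mechanism. I would write
\begin{align*}
    \widehat{\bm{x}}_{0}-\bm{x}_{0}
    &= \underbrace{\tfrac{1}{MN}\sum_{i,j}\bigl[\widehat{f}(\bm{y}_{0,j},0)-\widehat{f}(\bm{y}_{0,j},\tau_{i})\bigr]}_{A}
    +\underbrace{\tfrac{1}{MN}\sum_{i,j}\bigl[\widehat{f}(\bm{y}_{0,j},\tau_{i})-\bm{y}_{\tau_{i},j}\bigr]}_{B}\\
    &\quad+\underbrace{\tfrac{1}{MN}\sum_{i,j}\bm{y}_{\tau_{i},j}-\mathbb{E}_{\tau}[\phi_{\tau}(\bm{x}_{0})]}_{C}
    +\underbrace{\mathbb{E}_{\tau}[\phi_{\tau}(\bm{x}_{0})]-\bm{x}_{0}}_{D},
\end{align*}
noting that $\widehat{\bm{x}}_{0}=\tfrac{1}{MN}\sum_{i,j}\widehat{f}(\bm{y}_{0,j},0)$ because the inner sum over $i$ only multiplies by $N$, and that Assumption~\ref{assumption:assumptions}\,(A) gives $\mathbb{E}[\bm{y}_{\tau}\mid\bm{x}_{0},\tau]=\phi_{\tau}(\bm{x}_{0})$, which is what centers pieces $C$ and $D$. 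Applying $\|u_{1}+\cdots+u_{4}\|_{2}^{2}\leq 4\sum_{k}\|u_{k}\|_{2}^{2}$ then reduces the theorem to bounding each of the four pieces individually.

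Two of the pieces come essentially for free. The term $D$ contributes $\|D\|_{2}^{2}=G_{\phi}$ by definition. For $A$, the $L$-Lipschitz property of each coordinate of $\widehat{f}$ gives $\|\widehat{f}(\bm{y}_{0,j},0)-\widehat{f}(\bm{y}_{0,j},\tau_{i})\|_{2}\leq L\sqrt{d}\,\tau_{i}$ pointwise, so $\|A\|_{2}^{2}\leq L^{2}d\bigl(\tfrac{1}{N}\sum_{i}\tau_{i}\bigr)^{2}\leq L^{2}d\cdot\tfrac{1}{N}\sum_{i}\tau_{i}^{2}$ by Cauchy--Schwarz. Hoeffding's inequality applied to the $[0,1]$-valued variables $\tau_{i}^{2}$ then replaces the empirical mean by $\mathbb{E}[\tau^{2}]$ at the cost of an $O(N^{-1/2})$ fluctuation, which accounts simultaneously for the $L^{2}\mathbb{E}[\tau^{2}]$ term and the pure $N^{-1/2}$ term in the final bound.

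The heart of the proof is controlling $B$. By Cauchy--Schwarz, $\|B\|_{2}^{2}\leq\widehat{R}(\widehat{f})$, the empirical loss~\eqref{eq:empirical time-aware loss} evaluated at the ERM. Using $\widehat{R}(\widehat{f})\leq\widehat{R}(f^{*})$ for any near-minimizer $f^{*}\in\mathcal{F}$ of the population risk, it suffices to concentrate $\widehat{R}(f^{*})$ around $R(f^{*})$ for a single fixed function. The main technical obstacle is the double-index dependence: perturbing one image process (index $j$) changes the $N$ summands sharing that $j$, while perturbing one sampling time $\tau_{i}$ changes the $M$ summands sharing that $i$ (and the realized values $\bm{y}_{\tau_{i},j}$ for all $j$), giving bounded-difference constants $O(B/M)$ and $O(B/N)$ respectively. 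Here $B=(L\sqrt{B_{1}^{2}+1}+B_{2})^{2}$ is the a.s.~bound on a single loss, obtained from $f(\bm{0},0)=\bm{0}$, the Lipschitz property on $U\times[0,1]\subset\mathbb{R}^{d+1}$, and $\|\bm{y}_{\tau_{i},j}\|_{2}\leq B_{2}$. McDiarmid's inequality then produces a deviation of order $B\sqrt{(1/M+1/N)\log(1/\delta)}\lesssim B\sqrt{\log(1/\delta)/(M\wedge N)}$, explaining the $(M\wedge N)^{-1/2}$ rate with logarithmic factor. Piece $C$ is pure sample-mean concentration of $(1/MN)\sum\bm{y}_{\tau_{i},j}$ around $\mathbb{E}_{\tau}[\phi_{\tau}(\bm{x}_{0})]$ and is handled by the same McDiarmid bookkeeping, contributing a term of the same order.

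Assembling the four bounds by a union bound over their individual high-probability events yields the claim with probability at least $1-4\delta$; all $d$, $L$, $B_{1}$, $B_{2}$ dependencies are absorbed into the absolute constant $C$. The hypothesis $\delta>Me^{-N}$ enters because, after a union bound taken over indices (for example across the $M$ image-wise events arising when promoting a scalar Hoeffding estimate to a vector-valued statement), one needs $\log(M/\delta)\leq N$ in order to keep the logarithmic factor in the stated form $\sqrt{(M\wedge N)^{-1}\log(M/\delta)}$. The step I expect to be the most delicate is the McDiarmid accounting for piece $B$: one must recognize the asymmetric roles of the $M$ image processes versus the $N$ sampling times and verify that the bounded-difference constants combine into the $M\wedge N$ rate rather than the naive $MN$ rate that a single-sample-size Hoeffding bound would suggest. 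Once this accounting is done cleanly, the remaining work is bookkeeping of constants.
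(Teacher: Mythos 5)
Your decomposition into the four pieces $A,B,C,D$ is exactly the paper's (the paper reaches the same four terms through two nested triangle-inequality steps, treats $A$ by the Lipschitz property plus Hoeffding applied to $\tfrac1N\sum_i\tau_i^2$, and identifies $\|D\|_2^2=G_{\phi}$, just as you do). Where you genuinely diverge is piece $B$: the paper proves a uniform-convergence bound (Lemma~\ref{lem:uniform convergence for u-statistics}) built from per-$j$ Rademacher complexities, Talagrand's contraction, and a covering-number/chaining bound for Lipschitz classes due to von Luxburg--Bousquet (Lemmas~\ref{lem:rademacher complexity evaluation} and~\ref{lem:rademacher complexity evaluation version 2}; this machinery is the source of the stand-alone $N^{-1/2}$ term), and only then invokes the ERM inequality. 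Your route --- transfer via $\widehat{\mathcal{L}}(\widehat f)\le\widehat{\mathcal{L}}(f^{*})$ to a fixed comparator and concentrate the empirical risk of that single $f^{*}$ by McDiarmid over the $M+N$ independent sources (the $M$ processes and the $N$ times), with bounded differences $B/M$ and $B/N$ --- is more elementary, bypasses the complexity machinery entirely, and is sound under the independence structure declared at the start of Section~\ref{sec:theoretical analysis} and the almost-sure bound $B$ (which, as in the paper's own argument, implicitly requires $\|\bm{y}_{t,j}\|_2\le B_2$ at every $t$ in the support of $\tau$, not only at the sampled times). Since this deviation enters the bound on $\|B\|_2^2$ linearly, it sits comfortably inside the $\sqrt{(M\wedge N)^{-1}\log(M/\delta)}$ term.

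The gap is piece $C$, where ``the same McDiarmid bookkeeping'' is not enough as stated. First, McDiarmid only controls the deviation of a scalar from its mean; you still must bound that mean, and because the same $\tau_i$ is shared by all $j$, the vectors $\bm{y}_{\tau_i,1},\dots,\bm{y}_{\tau_i,M}$ are correlated through $\tau_i$, so $\mathbb{E}\|C\|_2^2$ is of order $1/N$ (driven by $\mathrm{Var}_{\tau}(\phi_{\tau}(\bm{x}_0))$), not $1/(MN)$; some version of this computation has to appear. Second, and more seriously, what enters the final bound is $\|C\|_2^2$: if you apply McDiarmid to $\|C\|_2$ and then square, you produce a term of order $\log(1/\delta)/(M\wedge N)$, which is \emph{not} dominated by $C\bigl(N^{-1/2}+\sqrt{(M\wedge N)^{-1}\log(M/\delta)}\bigr)$ in all regimes permitted by $\delta>Me^{-N}$ (take $M=1$ and $\delta=e^{-\sqrt N}$: the former is $\sqrt N$ while the latter is of order $N^{1/4}$). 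The fix is either to apply the bounded-difference inequality to $\|C\|_2^2$ itself (differences of order $B_2^2/M$ and $B_2^2/N$, together with $\mathbb{E}\|C\|_2^2\lesssim B_2^2 d/N$), or to do what the paper does: coordinatewise Hoeffding over the $N$ i.i.d. samples for each fixed $j$ and coordinate, a union bound over the $Md$ events, and an $\ell_1\ge\ell_2$ step --- which, incidentally, is where the $\log(M/\delta)$ factor and the hypothesis $\delta>Me^{-N}$ are actually used in the paper (to absorb a $\log(2Md/\delta)/N$ term into $\sqrt{N^{-1}\log(M/\delta)}$), rather than the mechanism you conjectured. A final cosmetic point: the Hoeffding fluctuation of $\tfrac1N\sum_i\tau_i^2$ is $\sqrt{\log(1/\delta)/(2N)}$, so it is covered by the last term of the theorem, not by the bare $N^{-1/2}$ term; in the paper that bare term is the Rademacher-complexity remnant, which your route does not even need.
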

The proof of Theorem~\ref{thm:non-asymptotic theory} basically relies on the notion called \emph{Rademacher complexity}~\cite{bartlett2002rademacher} and several concentration inequalities (see e.g.,~\cite{mohri2018foundations,vershynin2018high}).
The detailed proof of Theorem~\ref{thm:non-asymptotic theory} is deferred to~\ref{appsubsec:proof of theorem 1}.
To the best of our knowledge, the theoretical analysis dealing with the quantitative evaluation of the prediction made by the ERM is little investigated in the context of self-supervised denoising.
Here we describe several interpretations of this theorem.
The upper bound consists of four different kinds of quantities: the approximation error $E_{\mathcal{F}}$, the gap term $G_{\phi}$, the averaged squared-time $\mathbb{E}[\tau^{2}]$, and some constant terms that will vanish if $M,N\to\infty$.
The approximation error depends on the hardness of the denoising task and is expected to be small as long as $\mathcal{F}$ is a sufficiently large class.
The gap term measures to what extent the transformed image $\phi_{t}(\bm{x}_{0})$ is far apart from the clean $\bm{x}_{0}$ on average, which depends on the property of $\phi_{t}$ and $\bm{x}_{0}$.
The average time reveals some aspect of what kind of denatured dataset we use, since this term can become larger as the time for collecting images tend to be longer.
Therefore, Theorem~\ref{thm:non-asymptotic theory} implies that as long as both the denoising task and dataset are not too hard to deal with, training with the loss function~\eqref{eq:empirical time-aware loss} leads to successful results with high probability.

\section{Denatured-Noise2Noise}
\label{sec:proposed methodology}
The previous section reveals both the guarantee and limitation of the algorithm using~\eqref{eq:time-aware denoising model}.
We next move on to investigation of the empirical perspectives.
To this end, we employ the loss~\eqref{eq:time-aware denoising model} as the backbone of the self-supervised denoising framework we study.
In addition, we incorporate several techniques and regularization terms to enhance the performance.
Since the main loss function is extended from the existing loss called \emph{Noise2Noise}~\cite{lehtinen2018noise} to deal with denaturation in observed data, we call this framework \textit{Denatured-Noise2Noise} (DN2N).
We note that an illustration for the diagram can be found in~\eqref{subfig: prediction model of dn2n} of Figure~\ref{fig: overview of our method}.

\begin{figure*}[!t]
\centering
  \begin{minipage}[b]{.185\linewidth}
      \centering
        \includegraphics[width=.6\hsize]{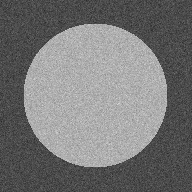}%
        \subcaption{$\bm{y}_0$}
        \label{subfig: training toy at 0}
  \end{minipage}%
  \begin{minipage}[b]{.185\linewidth}
      \centering
        \includegraphics[width=.6\hsize]{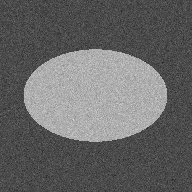}
        \subcaption{$\bm{y}_t$}
        \label{subfig: training toy at 14}
  \end{minipage}%
  \begin{minipage}[b]{.25\linewidth}
      \centering
        \includegraphics[width=\hsize]{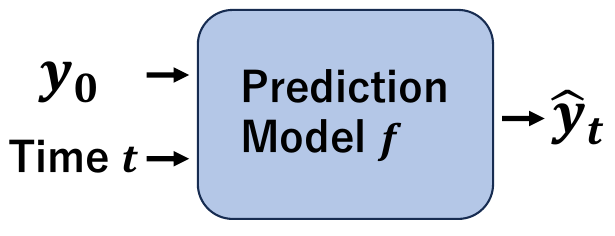}%
        \subcaption{Model}
        \label{subfig: prediction model of dn2n}
  \end{minipage}%
  \begin{minipage}[b]{.185\linewidth}
      \centering
        \includegraphics[width=.6\hsize]{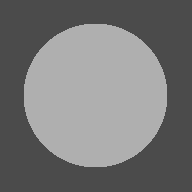}
        \subcaption{Clean}
        \label{subfig: clean toy at 0}
  \end{minipage}
  \begin{minipage}[b]{.185\linewidth}
      \centering
        \includegraphics[width=.6\hsize]{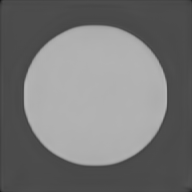}
        \subcaption{Prediction}
        \label{subfig: prediction toy dn2n}
  \end{minipage}
\caption{ 
    Overview of our method. 
    \eqref{subfig: training toy at 0} and~\eqref{subfig: training toy at 14}: 
    Examples of training images for our method, where $\bm{y}_t,\;t>0$ (resp. $\bm{y}_0$) denotes a denatured (resp. non-denatured) noisy image at time $\tau$ (resp. time $0$), and 
    the circle in~\eqref{subfig: training toy at 0} is degenerated into the ellipse in~\eqref{subfig: training toy at 14} as $t$ increases.
    \eqref{subfig: prediction model of dn2n}: 
    Our prediction model $f$ based on a deep neural network, 
    where in the prediction phase, the prediction is given by $\hat{\bm{y}}_{0}$.
    \eqref{subfig: clean toy at 0}: 
    Corresponding clean image $\bm{x}_0$ for $\bm{y}_0$. 
    \eqref{subfig: prediction toy dn2n}: 
    Prediction result $\hat{\bm{x}}_0$ by our method for $\bm{x}_0$.
    The detail of the setting for the simulation in \eqref{subfig: prediction toy dn2n} can be found in Section~\ref{subsubsec: expt1 setting}.
    }
\label{fig: overview of our method}   
\end{figure*}

\paragraph{Transforming Input Images}
In practice, we often deal with the case that both $M$ and $N$ are relatively small.
As seen in Theorem~\ref{thm:non-asymptotic theory}, in such situations the generalization performance of denoising models can be worsened.
To overcome this problem, we consider to transform data $\bm{y}_{0}$ to create multiple noisy images.
More precisely, let $\bm{m}$ be a random mapping on $\mathbb{R}^{d}$ such that for any $\bm{y}$, the output $\bm{m}(\bm{y})$ is defined as $\bm{m}(\bm{y})=\bm{y}+\bm{\varepsilon}$, where $\bm{y}\in\mathbb{R}^{d}$, and $\bm{\varepsilon}$ is a noise random variable that is independent of any other random variables and satisfies $\mathbb{E}[\bm{\varepsilon}|\bm{x}_{0}]=\bm{0}$.
Let $\bm{m}'$ be another mapping defined as in $\bm{m}$ with random noise $\bm{\varepsilon}'$ independently and identically distributed with $\bm{\varepsilon}$.
Using this mapping, we redefine the loss function as
\begin{align*}
\mathcal{L}_{D}(f)=\mathbb{E}_{\bm{y},\tau,\bm{m}}\left[\|f(\bm{m}(\bm{y}_{0}),\tau)-\bm{m}'(\bm{y}_{\tau})\|_{2}^{2}|\bm{x}_{0}\right].
\end{align*}
Note that this additional noise does not violate Assumption~\ref{assumption:assumptions} by the definition.
We remark that the approach that considers to add extra noise to the input noisy image has been utilized by several previous works~\cite{xu2020noisy,moran2020noisiser,pang2021recorrupted}.
Notably, we find that the existence of the additional noise also influences the performance of DN2N that takes denatured images as the inputs (see Section~\ref{subsec:implementation}).

In training, we minimize the empirical risk
\begin{equation}
    \label{eq:empirical risk}
    \widehat{\mathcal{L}}_{D}(f)=
    \frac{1}{LMN}\sum_{k=1}^{L}\sum_{j=1}^{M}\sum_{i=1}^{N}\|f(\bm{m}_{k}(\bm{y}_{0,j}),\tau_{i})-\bm{m}_{k}'(\bm{y}_{\tau_{i},j})\|_{2}^{2},
\end{equation}
where 
$\{\bm{m}_{k}\}_{k=1}^{L}$ and $\{\bm{m}_{k}'\}_{k=1}^{L}$ are random transforms defined with i.i.d. random noise $\{\bm{\varepsilon}_{k}\}_{k=1}^{L}$ and $\{\bm{\varepsilon}'_{k}\}_{k=1}^{L}$.

\paragraph{Averaging Loss}
In addition to the above problem, we also need to overcome the issue that the denaturation levels of data may deteriorate the denoising performance of~\eqref{eq:time-aware denoising model}.
Toward addressing the issue, we take one step into making use of more information contained in the denatured images.
Since the learning process is affected by both the denaturation and noise in the training images, we consider to alleviate only the noise and learn some structure of the denaturation.
Following a similar approach to the classical averaging technique commonly referred to as \emph{2D classification}~\cite{Campbell2012-rnCTFFIND,Zheng2017-rxMotionCor2} in Cryo-EM image analysis, we further enforce the denoising model to incorporate less-noisy image constructed by averaging a series of denatured noisy images $\{\bm{y}_{\tau_{i}}\}_{i=1}^{N}$.
We define the averaging loss as
\begin{align*}
    \mathcal{L}_{A}(f)=\mathbb{E}\left[
    \frac{1}{M}\sum_{j=1}^M
    \left\|\frac{1}{N}\sum_{i=1}^{N}f(\bm{m}(\bm{y}_{0,j}),\tau_{i})-\overline{\bm{y}}_{j}\right\|_{2}^{2}
    \right],
\end{align*}
where 
$\overline{\bm{y}}_{j}=\frac{1}{N}\sum_{i=1}^{N}\bm{m}'(\bm{y}_{\tau_{i},j})$,
and the expectation is taken for all the random variables.
Then, using $\bm{m}_{k}$ and $\bm{m}_{k}'$ in~\eqref{eq:empirical risk}, 
the empirical loss is also defined as
$$
    \widehat{\mathcal{L}}_{A}(f)=\frac{1}{LM}\sum_{k=1}^L\sum_{j=1}^{M}\left\|\frac{1}{N} \sum_{i=1}^{N}
    f(\bm{m}_{k}(\bm{y}_{0,j}),\tau_{i})    
    - \frac{1}{N}\sum_{i=1}^{N}\bm{m}_{k}'(\bm{y}_{\tau_{i},j}) \right\|_{2}^{2}.
$$

\paragraph{Training and Prediction}
The final version of the empirical loss in the denoising module is 
\begin{align*}
\widehat{\mathcal{L}}_{T}(f)=\widehat{\mathcal{L}}_{D}(f)+\frac{\mu}{LMN}\cdot\widehat{\mathcal{L}}_{A}(f),
\end{align*}
where $\mu\geq 0$ is a hyperparameter to be tuned.
Note that in the above definition, we further multiply the coefficient $(LMN)^{-1}$ to $\mu\widehat{\mathcal{L}}_{A}(f)$ to control the effect of the regularizer to the main loss $\widehat{\mathcal{L}}_{D}(f)$, depending on the number of noisy images available.
Let $\widehat{f}\in\textup{argmin}_{f\in\mathcal{F}}\widehat{\mathcal{L}}_{T}(f)$, where $\mathcal{F}$ is a space of denoising models.
After training with~\eqref{eq:empirical risk}, we predict the non-denatured clean image by
\begin{align}
\label{eq: prediction formula}
    \widehat{x}_0 = \frac{1}{KM} \sum_{i=1}^{K} \sum_{j=1}^{M} \widehat{f}(\bm{y}_{0,j}+\bm{\varepsilon}_{i},0),
\end{align}
where $K\in\mathbb{N}$.
Note that~\cite{quan2020self} also propose an empirical technique that averages outputs from a model to produce the prediction.
\cite{quan2020self} particularly utilize \emph{dropout}~\cite{srivastava2014dropout} for the inference.
Meanwhile, we consider additive random noise, different from~\cite{quan2020self}.

\section{Experiments}
\label{sec:experiments}
We conduct the following four numerical experiments to demonstrate the performance of DN2N: \textit{Expt.1} to \textit{Expt.4}. We describe the detail of each experiment from Section~\ref{subsec: expt1} to Section~\ref{subsec: expt4}. The purpose of the first experiment is to evaluate the consistency between our analytical result in Section~\ref{sec:theoretical analysis} and the prediction performance of our method on toy datasets, while investigating on what dataset our method potentially performs well. The second experiment is intended to quantitatively compare our method to existing representative methods for an MRI image dataset under a setting similar to~\cite{xu2021deformed}. The goal in the third (resp. fourth) experiment is to qualitatively evaluate the performance of our method by a Cryo-EM (resp. Fluorescence microscopy) image dataset, where no clean image is available for the evaluation. In Section~\ref{subsec:implementation}, we explain our implementation, the detail of the settings, the hyperparameter tuning, and ablation study. 

Throughout all the experiments except Expt.3, following~\cite{lehtinen2018noise,quan2020self,huang2021neighbor}, we evaluate the prediction performance of our method on the noisy dataset by two standard metrics in vision domain: Peak Signal to Noise Ratio (PSNR) (see e.g., \cite{hore2010image}) and Structural Similarity Index Measure (SSIM)~\cite{1284395}.
Note that we use the package called \texttt{scikit-image}~\cite{vanderwalt2014image} for implementation of these metrics. In the experiments, we use four NVIDIA V100 GPUs combined with two Intel Xeon Gold 6148 processors.

\subsection{Expt.1: Evaluation of Theoretical Results using Toy Datasets}
\label{subsec: expt1}
\subsubsection{Setting}
\label{subsubsec: expt1 setting}
We firstly craft two clean image datasets. All the images in the two datasets are gray-scale with the size 192$\times$192, and the number of images in each dataset is $25$.
The two datasets share the same clean image at time $0$ (see the clean image in~\eqref{subfig: clean toy at 0} of Figure~\ref{fig: overview of our method}), while the denaturing speed is different. Let $\mathcal{D}_1^{\rm toy}$ and $\mathcal{D}_2^{\rm toy}$ denote the two clean datasets. In $\mathcal{D}_1^{\rm toy}$, a circle in~\eqref{subfig: clean toy at 0} is denatured \emph{slowly} into an ellipse as the time index increases, whereas in $\mathcal{D}_2^{\rm toy}$, the denaturing speed is \emph{faster}; the formal definitions of $\mathcal{D}_1^{\rm toy}$ and $\mathcal{D}_2^{\rm toy}$ will be presented in the subsequent paragraph below. 
Following~\cite{khademi2021self}, we then add Poisson noise 
with the hyperparameter $\lambda>0$
and Gaussian noise 
with the standard deviation $\sigma$
for both the clean datasets; the formal definitions of the noises are also deferred to the paragraph below.
Two example images of $\mathcal{D}_2^{\rm toy}$ with Poisson-Gaussian noises are displayed in~\eqref{subfig: training toy at 0} and~\eqref{subfig: training toy at 14} of Figure~\ref{fig: overview of our method}.

We then describe the formal definitions of the toy dataset and noises in the following paragraphs separately.

\begin{figure*}[!t]
\centering
    \begin{tabular}{ccccc}
    \centering
      \begin{minipage}[b]{.175\linewidth}
      \centering
        \includegraphics[width=.6\hsize]{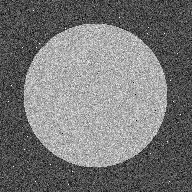}
        \subcaption{}
        \label{sub-fig: toy-slow-t0-25-25}
      \end{minipage} &
      \begin{minipage}[b]{.175\linewidth}
      \centering
        \includegraphics[width=.6\hsize]{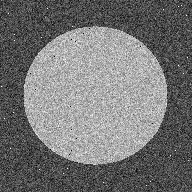}%
        \subcaption{}
        \label{sub-fig: toy-slow-t6-25-25}
      \end{minipage} &
      \begin{minipage}[b]{.175\linewidth}
      \centering
        \includegraphics[width=.6\hsize]{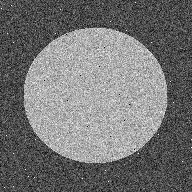}%
        \subcaption{}
        \label{sub-fig: toy-slow-t12-25-25}
      \end{minipage} &
      \begin{minipage}[b]{.175\linewidth}
      \centering
        \includegraphics[width=.6\hsize]{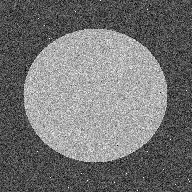}%
        \subcaption{}
        \label{sub-fig: toy-slow-t18-25-25}
      \end{minipage} &
      \begin{minipage}[b]{.175\linewidth}
      \centering
        \includegraphics[width=.6\hsize]{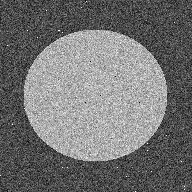}%
        \subcaption{}
        \label{sub-fig: toy-slow-t24-25-25}
      \end{minipage}\\

      \begin{minipage}[b]{.175\linewidth}
      \centering
        \includegraphics[width=.6\hsize]{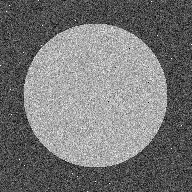}
        \subcaption{}
        \label{sub-fig: toy-fast-t0-25-25}
      \end{minipage} &
      \begin{minipage}[b]{.175\linewidth}
      \centering
        \includegraphics[width=.6\hsize]{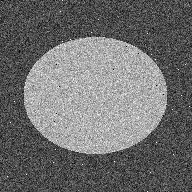}%
        \subcaption{}
        \label{sub-fig: toy-fast-t6-25-25}
      \end{minipage} &
      \begin{minipage}[b]{.175\linewidth}
      \centering
        \includegraphics[width=.6\hsize]{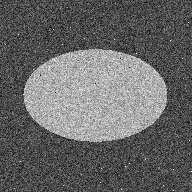}%
        \subcaption{}
        \label{sub-fig: toy-fast-t12-25-25}
      \end{minipage} &
      \begin{minipage}[b]{.175\linewidth}
      \centering
        \includegraphics[width=.6\hsize]{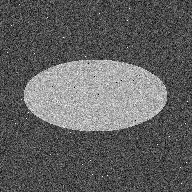}%
        \subcaption{}
        \label{sub-fig: toy-fast-t18-25-25}
      \end{minipage} &
      \begin{minipage}[b]{.175\linewidth}
      \centering
        \includegraphics[width=.6\hsize]{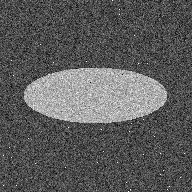}%
        \subcaption{}
        \label{sub-fig: toy-fast-t24-25-25}
      \end{minipage}\\

      \begin{minipage}[b]{.175\linewidth}
      \centering
        \includegraphics[width=.6\hsize]{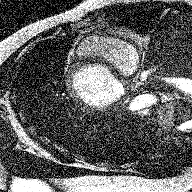}
        \subcaption{}
        \label{sub-fig: mri-noisy-id2-t0}
      \end{minipage} &
      \begin{minipage}[b]{.175\linewidth}
      \centering
        \includegraphics[width=.6\hsize]{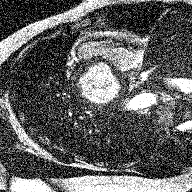}%
        \subcaption{}
        \label{sub-fig: mri-noisy-id2-t6}
      \end{minipage} &
      \begin{minipage}[b]{.175\linewidth}
      \centering
        \includegraphics[width=.6\hsize]{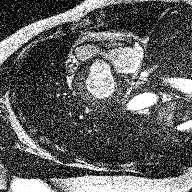}%
        \subcaption{}
        \label{sub-fig: mri-noisy-id2-t12}
      \end{minipage} &
      \begin{minipage}[b]{.175\linewidth}
      \centering
        \includegraphics[width=.6\hsize]{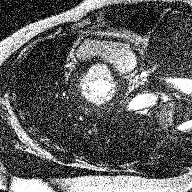}%
        \subcaption{}
        \label{sub-fig: mri-noisy-id2-t18}
      \end{minipage} &
      \begin{minipage}[b]{.175\linewidth}
      \centering
        \includegraphics[width=.6\hsize]{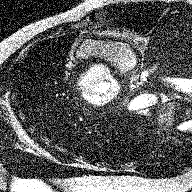}%
        \subcaption{}
        \label{sub-fig: mri-noisy-id2-t24}
      \end{minipage}\\

      \begin{minipage}[b]{.175\linewidth}
      \centering
        \includegraphics[width=.6\hsize]{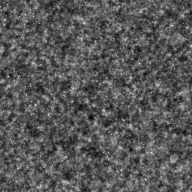}
        \subcaption{}
        \label{sub-fig: cryoem-original-top-left-t1}
      \end{minipage} &
      \begin{minipage}[b]{.175\linewidth}
      \centering
        \includegraphics[width=.6\hsize]{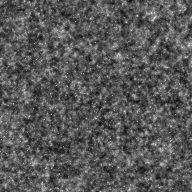}%
        \subcaption{}
        \label{sub-fig: cryoem-original-top-left-t4}
      \end{minipage} &
      \begin{minipage}[b]{.175\linewidth}
      \centering
        \includegraphics[width=.6\hsize]{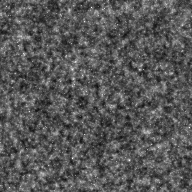}%
        \subcaption{}
        \label{sub-fig: cryoem-original-top-left-t8}
      \end{minipage} &
      \begin{minipage}[b]{.175\linewidth}
      \centering
        \includegraphics[width=.6\hsize]{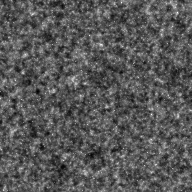}%
        \subcaption{}
        \label{sub-fig: cryoem-original-top-left-t12}
      \end{minipage} &
      \begin{minipage}[b]{.175\linewidth}
      \centering
        \includegraphics[width=.6\hsize]{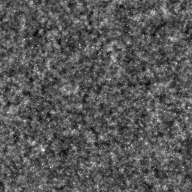}%
        \subcaption{}
        \label{sub-fig: cryoem-original-top-left-t15}
      \end{minipage}\\

      \begin{minipage}[b]{.175\linewidth}
      \centering
        \includegraphics[width=.6\hsize]{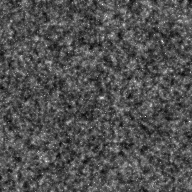}
        \subcaption{}
        \label{sub-fig: cryoem-original-bottom-right-t1}
      \end{minipage} &
      \begin{minipage}[b]{.175\linewidth}
      \centering
        \includegraphics[width=.6\hsize]{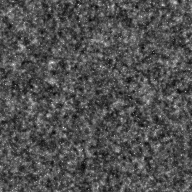}%
        \subcaption{}
        \label{sub-fig: cryoem-original-bottom-right-t4}
      \end{minipage} &
      \begin{minipage}[b]{.175\linewidth}
      \centering
        \includegraphics[width=.6\hsize]{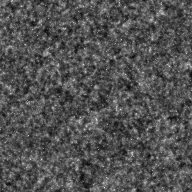}%
        \subcaption{}
        \label{sub-fig: cryoem-original-bottom-right-t8}
      \end{minipage} &
      \begin{minipage}[b]{.175\linewidth}
      \centering
        \includegraphics[width=.6\hsize]{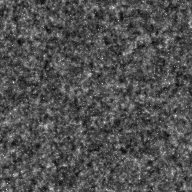}%
        \subcaption{}
        \label{sub-fig: cryoem-original-bottom-right-t12}
      \end{minipage} &
      \begin{minipage}[b]{.175\linewidth}
      \centering
        \includegraphics[width=.6\hsize]{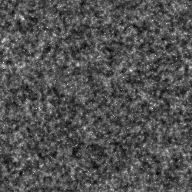}%
        \subcaption{}
        \label{sub-fig: cryoem-original-bottom-right-t15}
      \end{minipage}\\
      
      \begin{minipage}[b]{.175\linewidth}
      \centering
        \includegraphics[width=.6\hsize]{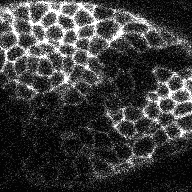}
        \subcaption{}
        \label{sub-fig: mfm-noisy-t0}
      \end{minipage} &
      \begin{minipage}[b]{.175\linewidth}
      \centering
        \includegraphics[width=.6\hsize]{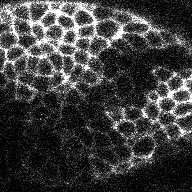}%
        \subcaption{}
        \label{sub-fig: mfm-noisy-t13}
      \end{minipage} &
      \begin{minipage}[b]{.175\linewidth}
      \centering
        \includegraphics[width=.6\hsize]{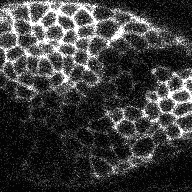}%
        \subcaption{}
        \label{sub-fig: mfm-noisy-t25}
      \end{minipage} &
      \begin{minipage}[b]{.175\linewidth}
      \centering
        \includegraphics[width=.6\hsize]{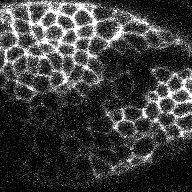}%
        \subcaption{}
        \label{sub-fig: mfm-noisy-t38}
      \end{minipage} &
      \begin{minipage}[b]{.175\linewidth}
      \centering
        \includegraphics[width=.6\hsize]{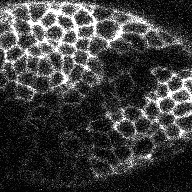}%
        \subcaption{}
        \label{sub-fig: mfm-noisy-t49}
      \end{minipage}
    \end{tabular}
     \caption{
         Visualization of noisy denaturation images used in Expt.1 to Expt.4. 
         The first \& second, the third, the fourth \& fifth, and the sixth rows are the toy images of Expt.1, MRI images~\cite{bernard2018deep} of Expt.2, Cryo-EM images~\cite{Iudin2023-ua} of Expt.3, and FM images~\cite{haward2020n2FMdataset} of Expt.4, respectively. The first column visualizes noisy images at time $0$ except for the fourth to sixth rows, and the time index increases from the first to fifth column; the fourth to sixth rows in the first column visualize noisy images at time $1$. The noisy images at time $0$ for the fourth, fifth, and sixth rows are visualized at \eqref{subfig: original-top-left}, ~\eqref{subfig: original-bottom-right}, and~\eqref{subfig: mfm-clipped-noisy-t0}, respectively.  
     }
     \label{fig: training dataset for each expt}
  \end{figure*}

\paragraph{Toy Dataset}
We introduce the definitions of two clean datasets: $\mathcal{D}_1^{\rm toy}=\left\{\bm{x}_0, \bm{x}_1^{(1)},\cdots,\bm{x}_i^{(1)},\cdots,\bm{x}_N^{(1)}\right\}$ and $\mathcal{D}_2^{\rm toy}=\left\{\bm{x}_0, \bm{x}_1^{(2)},\cdots,\bm{x}_i^{(2)},\cdots,\bm{x}_N^{(2)}\right\}$, where $N=24$, $\bm{x}_0$ is non-denatured clean image, and $\bm{x}_i^{(s)}$ $(i \in \mathbb{N})$ denotes denatured clean image with time index $i$ in $\mathcal{D}_s^{\rm toy}$. The clean image $\bm{x}_0$ is visualized in~\eqref{subfig: clean toy at 0} of Figure~\ref{fig: overview of our method}. Let us define the horizontal (resp. vertical) axis as $a$ (resp. $b$) axis to the clean image. In~\eqref{subfig: clean toy at 0}, the coordinates at the bottom-left, bottom-right, top-right, and top-left pixel are defined by $(1,192)$, $(192,192)$, $(192,1)$, and $(1,1)$, respectively. In addition, let $(97,97)$ denote the coordinates of the \emph{center} pixel in $\bm{x}_0$.
The gray circled region is defined by
\begin{equation}
\label{eq: denatured image at time t}
    \mathcal{R}_i := \left\{
    (a,b)\in \mathbb{N}^2\;\middle|\; 
    \frac{(a-97)^2}{q_i^2} + \frac{(b-97)^2}{1^2} \leq \left(\frac{3}{4} \cdot \frac{\ell}{2}\right)^2\right\}, 
\end{equation}
where $\ell=192$ and $q_0 = 1$. 
The remaining region in $\bm{x}_0$ is defined by $\mathcal{R}_{0}^{c}$. In Expt.1, we design two toy clean image datasets, in both of which the gray circle in $\bm{x}_0$ becomes an gray ellipse as the time index $i$ increases. The difference is the denaturing speed, and the ellipse region in $\bm{x}_i\;(i = 1,...,N)$ is defined by $\mathcal{R}_i$ of~\eqref{eq: denatured image at time t}, whose
\begin{equation}
\label{eq: definition of denaturing speed}
    q_i = \left\{
            \begin{array}{ll}
            \log\left(\sqrt{i}\cdot(\sqrt{e}-e)/N + e\right), & \textrm{if}\;\bm{x}_i \in \mathcal{D}_1^{\rm toy},
            \textrm{i.e.,\;``\textit{slow}''\;case},\\
            \exp\left(- i^{1.1}\log2 / N\right), & \textrm{if}\;\bm{x}_i \in \mathcal{D}_2^{\rm toy},
            \textrm{i.e.,\;``\textit{fast}''\;case},
            \end{array}
            \right.
\end{equation}
whereas the remaining region is defined by $\mathcal{R}_i^c$.
Note that $q_i = 1$ in both cases of~\eqref{eq: definition of denaturing speed} if $i=0$. 
Furthermore, $(a,b)$-th pixel value in $\bm{x}_i$, denoted by $x_i (a,b)$, is defined as follows:
\begin{equation*}
    x_i (a,b) = \left\{
            \begin{array}{ll}
            175, & \textrm{if\;}(a,b) \in \mathcal{R}_i,\\
            75, & \textrm{if\;}(a,b) \in \mathcal{R}_i^c.
            \end{array}
            \right.
\end{equation*}

\paragraph{Poisson-Gaussian Noise}
We follow~\cite{khademi2021self} to define a noisy image $\bm{y}$ with the Poisson-Gaussian noise as follows: 
$\bm{y}=\lambda^{-1}\bm{z}+\bm{\delta}$, where $\bm{\delta} \sim \mathcal{N}\left(\bm{0},\sigma^2 \bm{I}\right)$, $\bm{z}\sim \mathcal{P}(\lambda\cdot \bm{x})$, $\mathcal{P}(\beta)$ is the Poisson distribution with parameter $\beta$, $\lambda$ is a positive hyperparameter, and $\bm{x}$ is a clean image. Note that \cite{khademi2021self} further approximate the formulation to a Gaussian noise model. Meanwhile, we use the modeling \citet{khademi2021self} initially consider.

\paragraph{Definition of $\bm{y}_{\tau_i, j}$ in~\eqref{eq:empirical risk}}
Let $\Tilde{\mathcal{D}}_{s}^{\text{toy}}=(\bm{y}^{(s)}_0,\bm{y}^{(s)}_1,...,\bm{y}^{(s)}_{N})$ denote a sequence of noisy images with denaturation for training DN2N, where each $\bm{y}^{(s)}_{i}$ is constructed from $\bm{x}^{(s)}_{i} \in \mathcal{D}_s^{\text{toy}}$ with the above Poisson-Gaussian noise. 
Then, $\bm{y}_{\tau_i, j}$ is defined by the following two steps. Firstly, $\tau_i$ is taken from a set obtained by applying uniform permutation to $\left\{\frac{n}{10} \middle| n =1,...,N\right\}$, and secondly, $\bm{y}_{\tau_i, j}$ is defined by $\bm{y}_{10 \times \tau_i} \in \Tilde{\mathcal{D}}_{s}^{\text{toy}}$ (we slightly abuse the notation of $\bm{y}_{\tau_i}$ for simplicity). We note that the quantity $M$ of Section~\ref{sec:proposed methodology} is one in this experiment.


\paragraph{Supplementary Information for Toy Images in Figure~\ref{fig: overview of our method} and~\ref{fig: training dataset for each expt}}
For~\eqref{subfig: training toy at 0} (resp.~\eqref{subfig: training toy at 14}), we visualize $\bm{y}^{(2)}_0$ (resp. $\bm{y}^{(2)}_{12}$) with the hyperparameters $(\lambda, \sigma)=(10,10)$. 
In~\eqref{subfig: prediction toy dn2n}, the predicted image from our trained model
is shown, where the fast denaturation dataset $(\bm{y}^{(2)}_0,...,\bm{y}^{(2)}_{24})$ with $(\lambda,\sigma)=(10,10)$ are used.
At~\eqref{sub-fig: toy-slow-t0-25-25} to~\eqref{sub-fig: toy-fast-t24-25-25} in Figure~\ref{fig: training dataset for each expt}, we visualize how the dataset $\Tilde{\mathcal{D}}_{s}^{\text{toy}}$ denatures. In the figure, the left to right image in the first (resp. second) row correspond to $\bm{y}^{(1)}_{0}, \bm{y}^{(1)}_{6}, \bm{y}^{(1)}_{12}, \bm{y}^{(1)}_{18}, \bm{y}^{(1)}_{24}$ (resp. $\bm{y}^{(2)}_{0}, \bm{y}^{(2)}_{6}, \bm{y}^{(2)}_{12}, \bm{y}^{(2)}_{18}, \bm{y}^{(2)}_{24}$)
with the hyperparameters $(\lambda, \sigma)=(25,25)$.

\paragraph{Comparative Methods}
We employ three comparative methods to investigate DN2N: 
\begin{itemize}
    \item BM3D~\cite{dabov2007image} is a single image denoising method without deep learning. We apply the module~\cite{makinen2022bm3d} provided by the authors of~\cite{makinen2020collaborate} to the non-denatured noisy image $\bm{y}^{(s)}_{0}$.

    \item Topaz-Denoise (TD)~\cite{bepler2020topaz} is a variant of Noise2Noise (N2N)~\cite{lehtinen2018noise}. We denoise $\bm{y}^{(s)}_{0}$ using $\Tilde{\mathcal{D}}^{\mathrm{toy}}_s$ based on the implementation inspired by~\cite{bepler2020topaz}; see details in~\ref{append: further expt}.

    \item Noise2Fast (N2F)~\cite{lequyer2022noise} is a self-supervised single image denoising method. We run the GitHub code of~\cite{jasonlequyer2021n2f} using only $\bm{y}^{(s)}_{0}$.

\end{itemize}

\subsubsection{Result and Discussion}
\label{subsubsec: result and discussion on toy dataset}
The results are shown in Table~\ref{tab:toydata results}. For all the pairs $(\lambda, \sigma)$ in the table, DN2N works better for the datasets with slow denaturation, compared to the fast denaturation datasets. This tendency positively supports the theoretical finding in Theorem~\ref{thm:non-asymptotic theory}.

\begin{table*}[!t]
\caption{
Results of Expt.1: evaluation of predicting performance by three existing methods and DN2N on toy image datasets with different denaturing speed and noise intensity. The word “Slow” or “Fast” represents the denaturing speed. 
The higher number means the better performance in both PSNR and SSIM metrics.
}
\label{tab:toydata results}
    \centering
    \scalebox{0.8}{
        \begin{tabular}{cccccccccc}
        \toprule
\multirow{2}{*}{\rotatebox{90}{}}      & \multirow{2}{*}{$(\lambda,\sigma)$}   & \multicolumn{2}{c}{BM3D}  & \multicolumn{2}{c}{TD}  & \multicolumn{2}{c}{N2F} & \multicolumn{2}{c}{DN2N}  \\ \cline{3-10}
                                       &                      & PSNR$\uparrow$ & SSIM$\uparrow$ & PSNR & SSIM & PSNR & SSIM & PSNR & SSIM \\ \hline
\multirow{4}{*}{\rotatebox{90}{Slow}}  & $(25,10)$ & 37.14 & 0.989 & 33.49 & 0.965 & 38.72 & 0.990 & 31.64 & 0.968
  \\
                                       & $(10,10)$ & 36.91 & 0.989 & 34.32 & 0.970 & 38.98 & 0.991 & 30.97 & 0.966
 \\
                                       & $(25,25)$ & 36.04 & 0.985 & 34.38 & 0.947 & 37.47 & 0.985 & 30.85 & 0.963
 \\
                                       & $(10,25)$ & 35.86 & 0.985 & 34.35 & 0.954 & 36.79 & 0.982 & 31.75 & 0.966
 \\ \hline
\multirow{4}{*}{\rotatebox{90}{Fast}}  & $(25,10)$ & 36.81 & 0.988 & 29.15 & 0.940 & 38.83 & 0.990 & 29.40 & 0.956 \\
                                       & $(10,10)$ & 37.02 & 0.989 & 27.91 & 0.934 & 38.74 & 0.991 & 30.00 & 0.960 \\
                                       & $(25,25)$ & 36.08 & 0.986 & 28.51 & 0.927 & 37.19 & 0.984 & 27.77 & 0.940 \\
                                       & $(10,25)$ & 35.65 & 0.984 & 28.00 & 0.912 & 36.62 & 0.981 & 26.92 & 0.931 \\
            \bottomrule
        \end{tabular}
    }
\end{table*}

For TD, it performs better on the slow denaturation datasets than on the fast counterparts. This implies that the slow denaturation images $\bm{y}^{(1)}_1$, $\bm{y}^{(1)}_2$,...,$\bm{y}^{(1)}_{24}$ are more beneficial to accurately predict $\bm{x}_0$ than the fast denaturation images $\bm{y}^{(2)}_1,\bm{y}^{(2)}_2,...,\bm{y}^{(2)}_{24}$ for TD. Moreover, we can see that the performance of TD is more robust than DN2N against the fast denaturation datasets. We think the reason is as follows. During training of TD, the reconstructed image from $\bm{y}^{(2)}_0$ targets only the image $\bm{y}^{(2)}_1$, therefore TD learns how to predict $\bm{x}_0$ mainly from $\bm{y}^{(2)}_0$. On the other hand, during training of DN2N, the reconstructed image from $\bm{y}^{(2)}_0$ targets the various images of $\bm{y}^{(2)}_1$, $\bm{y}^{(2)}_2$,...,$\bm{y}^{(2)}_{24}$.
Since many images in $\bm{y}^{(2)}_1$,...,$\bm{y}^{(2)}_{24}$ are dissimilar with $\bm{y}^{(2)}_0$ (see e.g.,~\eqref{subfig: training toy at 14}, and~\eqref{sub-fig: toy-fast-t6-25-25} to~\eqref{sub-fig: toy-fast-t24-25-25}), 
the prediction performance of DN2N is more influenced by the dissimilar images than TD.

It is interesting that both BM3D and N2F outperform DN2N. This observation may be due to the dissimilarity between $\bm{y}^{(s)}_0$ and $\bm{y}^{(s)}_t$, as $t$ takes a larger value.
Additionally, the relatively simple $\bm{y}_0$ likely enhances the gap; see e.g.,~\eqref{subfig: training toy at 0},~\eqref{sub-fig: toy-slow-t0-25-25}, and~\eqref{sub-fig: toy-fast-t0-25-25}. Those results suggest that, even when the denaturation is weaker, if $\bm{y}_0$ is relatively simple, the assistance from $\bm{y}_1$, $\bm{y}_2$, ..., $\bm{y}_{N}$ to denoise $\bm{y}_0$ is not efficient enough for DN2N.

In Expt.2, we evaluate DN2N using a noisy denaturation MRI image dataset~\cite{bernard2018deep}, satisfying  (i) $\bm{y}_0$ is relatively complicated, and (ii) the denaturation is weak. The MRI images are visualized in the third row of Figure~\ref{fig: training dataset for each expt}, where $\bm{y}_0$ of the MRI images is shown in~\eqref{sub-fig: mri-noisy-id2-t0}.

\subsection{Expt.2: Performance Evaluation using MRI Image Dataset}
\label{subsec: expt2}
\subsubsection{Setting}
\label{subsubsec: setting expt2}
Following~\cite{xu2021deformed}, we employ time series MRI image dataset called ACDC~\cite{bernard2018deep}.
According to~\cite{bernard2018deep}, all the dataset contains gray-scaled images whose sizes are 192$\times$192 each. Among 20 healthy patients in the dataset, we firstly pick the first five IDs (i.e., from ID~1 to ID~5), and for each ID, we use the first 25 images in ascending order of time. Following~\cite{khademi2021self}, we then add the Poisson-Gaussian noise with the hyperparameter values $(\lambda, \sigma)=(10,50)$ to all the images independently; see an example for the noisy images in~\eqref{sub-fig: noisy-id1-t0} of Figure~\ref{fig: append acdc before after}. 

Let $\Tilde{\mathcal{D}} =(\bm{y}_{0},...,\bm{y}_{i},...,\bm{y}_{24})$ denote a sequence of the noisy denaturation images for training DN2N, where $i \in \{0\}\cup \mathbb{N}$ is the time index. As for definition of $\bm{y}_{\tau_i, j}$ in~\eqref{eq:empirical risk}, we follow the same procedure described in the third from the last paragraph of Section~\ref{subsubsec: expt1 setting}. Regarding the performance comparison, 
in addition to three methods described in the last paragraph of Section~\ref{subsubsec: expt1 setting}, we employ one more comparative method:
\begin{itemize}
    \item Deformed2Self (D2S)~\cite{xu2021deformed} is a self-supervised deformed image denoising method, which is the \emph{state-of-the-art} method for the MRI image dataset. We apply the GitHub code of~\cite{daviddmc2021d2s} to $\Tilde{\mathcal{D}}$.
\end{itemize}

\paragraph{Supplementary Information for the Third Row in Figure~\ref{fig: training dataset for each expt}}
The left to right image in the third row correspond to $\bm{y}_{0}$, $\bm{y}_{6}$, $\bm{y}_{12}$, $\bm{y}_{18}$, and $\bm{y}_{24}$ with ID~2.

\subsubsection{Result and Discussion}

\begin{table*}[!t]
\caption{
    Results of Expt.2: performance comparison using five persons' time series MRI images from ACDC. The best performance is in bold font, with the second best result in underlined. 
}
\label{tab:acdc results}
    \centering
    \scalebox{0.8}{
        \begin{tabular}{ccccccccccc}
        \toprule
            \multirow{2}{*}{Method} & \multicolumn{2}{c}{ID 1}  & \multicolumn{2}{c}{ID 2}  & \multicolumn{2}{c}{ID 3}  & \multicolumn{2}{c}{ID 4}  & \multicolumn{2}{c}{ID 5} \\ \cline{2-11}
             & PSNR$\uparrow$ & SSIM$\uparrow$ & PSNR & SSIM & PSNR & SSIM & PSNR & SSIM & PSNR & SSIM \\ \hline
            BM3D& 22.29 & 0.743 & 23.32 & 0.722 & 23.35 & 0.752 & 18.12 & 0.687 & 23.48 & 0.741 \\ 
            TD & 22.91 & 0.740 & 23.02 & 0.672 & 24.16 & 0.783 & \underline{21.23} & 0.769 & \underline{24.74} & \underline{0.776} \\ 
            N2F & 22.93 & 0.754 & 23.64 & 0.749 & 23.80 & 0.749 & 20.31 & 0.720 & 23.63 & 0.718 \\ 
            D2S & \textbf{23.92} & \underline{0.807} & \underline{24.80} & \underline{0.752} & \underline{25.04} & \underline{0.784} & 21.11 & \underline{0.778} & \textbf{24.83} & \textbf{0.777} \\ 
            DN2N & \underline{23.16} & \textbf{0.808} & \textbf{24.85} & \textbf{0.790} & \textbf{25.60} & \textbf{0.843} & \textbf{21.24} & \textbf{0.782} & 22.79 & 0.763 \\ 
            \bottomrule
        \end{tabular}
    }
\end{table*}

\begin{figure*}[!t]
\centering
    \begin{tabular}{ccccc}
    \centering
      \begin{minipage}[b]{.175\linewidth}
      \centering
        \includegraphics[width=.65\hsize]{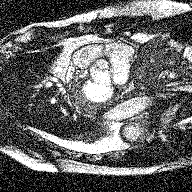}
        \subcaption{}
        \label{sub-fig: noisy-id1-t0}
      \end{minipage} &
      \begin{minipage}[b]{.175\linewidth}
      \centering
        \includegraphics[width=.65\hsize]{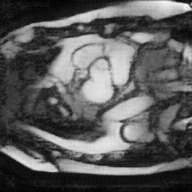}%
        \subcaption{}
        \label{sub-fig: td-pred-id1}
      \end{minipage} &
      \begin{minipage}[b]{.175\linewidth}
      \centering
        \includegraphics[width=.65\hsize]{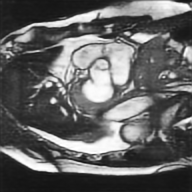}%
        \subcaption{}
        \label{sub-fig: d2s-pred-id1}
      \end{minipage} &
      \begin{minipage}[b]{.175\linewidth}
      \centering
        \includegraphics[width=.65\hsize]{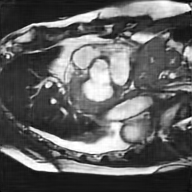}%
        \subcaption{}
        \label{sub-fig: dn2n-pred-id1}
      \end{minipage} &
      \begin{minipage}[b]{.175\linewidth}
      \centering
        \includegraphics[width=.65\hsize]{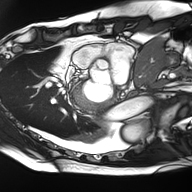}%
        \subcaption{}
        \label{sub-fig: clean-id1-t0}
      \end{minipage} \\

      \begin{minipage}[b]{.175\linewidth}
      \centering
        \includegraphics[width=.65\hsize]{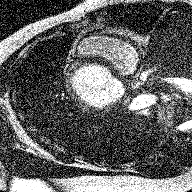}
        \subcaption{}
        \label{sub-fig: noisy-id2-t1}
      \end{minipage} &
      \begin{minipage}[b]{.175\linewidth}
      \centering
        \includegraphics[width=.65\hsize]{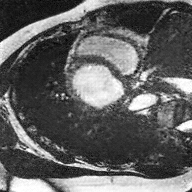}%
        \subcaption{}
        \label{sub-fig: td-pred-id2}
      \end{minipage} &
      \begin{minipage}[b]{.175\linewidth}
      \centering
        \includegraphics[width=.65\hsize]{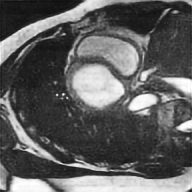}
        \subcaption{}
        \label{sub-fig: d2s-pred-id2}
      \end{minipage} &
      \begin{minipage}[b]{.175\linewidth}
      \centering
        \includegraphics[width=.65\hsize]{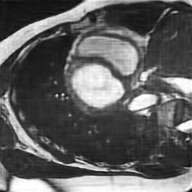}
        \subcaption{}
        \label{sub-fig: dn2n-pred-id2}
      \end{minipage} &
      \begin{minipage}[b]{.175\linewidth}
      \centering
        \includegraphics[width=.65\hsize]{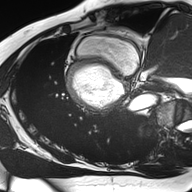}
        \subcaption{}
        \label{sub-fig: clean-id2-t0}
      \end{minipage} \\

      \begin{minipage}[b]{.175\linewidth}
            \centering
            \includegraphics[width=.65\hsize]{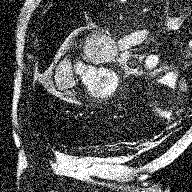}%
            \subcaption{}
            \label{subfig: noisy-id3-t0}
        \end{minipage} &
        \begin{minipage}[b]{.175\linewidth}
          \centering
            \includegraphics[width=.65\hsize]{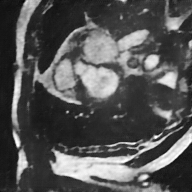}%
            \subcaption{}
            \label{sub-fig: td-pred-id3}
      \end{minipage} &
        \begin{minipage}[b]{.175\linewidth}
            \centering
            \includegraphics[width=.65\hsize]{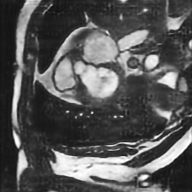}%
            \subcaption{}
            \label{subfig: d2s-pred-id3}
        \end{minipage} &
        \begin{minipage}[b]{.175\linewidth}
            \centering
            \includegraphics[width=.65\hsize]{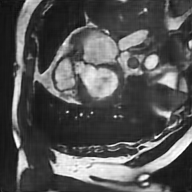}%
            \subcaption{}
            \label{subfig: dn2n-pred-id3}
        \end{minipage} &
        \begin{minipage}[b]{.175\linewidth}
            \centering
            \includegraphics[width=.65\hsize]{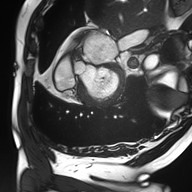}%
            \subcaption{}
            \label{subfig: clean-id3-t0}
        \end{minipage} \\

      \begin{minipage}[b]{.175\linewidth}
      \centering
        \includegraphics[width=.65\hsize]{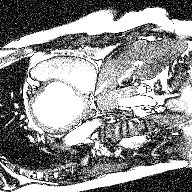}%
        \subcaption{}
        \label{sub-fig: noisy-id4-t0}
      \end{minipage} &
      \begin{minipage}[b]{.175\linewidth}
      \centering
        \includegraphics[width=.65\hsize]{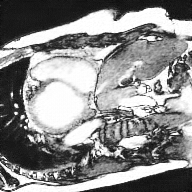}%
        \subcaption{}
        \label{sub-fig: td-pred-id4}
      \end{minipage} &
      \begin{minipage}[b]{.175\linewidth}
      \centering
        \includegraphics[width=.65\hsize]{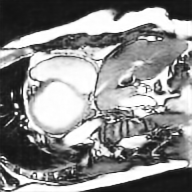}%
        \subcaption{}
        \label{sub-fig: d2s-pred-id4}
      \end{minipage} &
      \begin{minipage}[b]{.175\linewidth}
      \centering
        \includegraphics[width=.65\hsize]{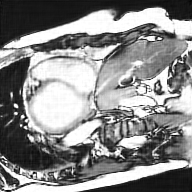}%
        \subcaption{}
        \label{sub-fig: dn2n-pred-id4}
      \end{minipage} &
      \begin{minipage}[b]{.175\linewidth}
      \centering
        \includegraphics[width=.65\hsize]{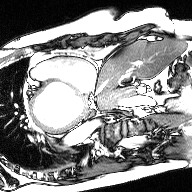}%
        \subcaption{}
        \label{sub-fig: clean-id4-t0}
      \end{minipage} \\

      \begin{minipage}[b]{.175\linewidth}
      \centering
        \includegraphics[width=.65\hsize]{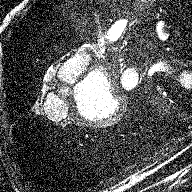}%
        \subcaption{}
        \label{sub-fig: noisy-id5-t0}
      \end{minipage} &
      \begin{minipage}[b]{.175\linewidth}
          \centering
            \includegraphics[width=.65\hsize]{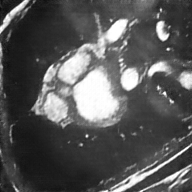}%
            \subcaption{}
            \label{sub-fig: td-pred-id5}
      \end{minipage} &
      \begin{minipage}[b]{.175\linewidth}
      \centering
        \includegraphics[width=.65\hsize]{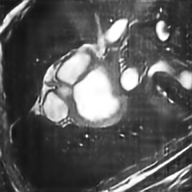}%
        \subcaption{}
        \label{sub-fig: d2s-pred-id5}
      \end{minipage} &
      \begin{minipage}[b]{.175\linewidth}
      \centering
        \includegraphics[width=.65\hsize]{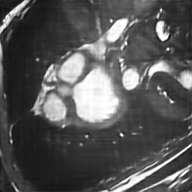}%
        \subcaption{}
        \label{sub-fig: dn2n-pred-id5}
      \end{minipage} &
      \begin{minipage}[b]{.175\linewidth}
      \centering
        \includegraphics[width=.65\hsize]{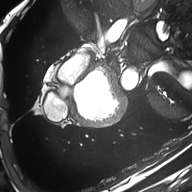}%
        \subcaption{}
        \label{sub-fig: clean-id5-t0}
      \end{minipage} 
    \end{tabular}
     \caption{
         Visualization of predicted images in Expt.2.
         The first to fifth rows correspond to ID~1 to ID~5, respectively.
         The first column corresponds to a noisy image at time $0$ except for the second row;~\eqref{sub-fig: noisy-id2-t1} visualizes the noisy image at time $1$. The second to fifth columns correspond to predicted image by TD, predicted image by D2S, predicted image by DN2N, and clean image at time $0$, respectively.
     }
     \label{fig: append acdc before after}
  \end{figure*}

The quantitative results are in Table~\ref{tab:acdc results}. From the table, we can see that our method performs overall the best, and the second best method is D2S. In contrast to Table~\ref{tab:toydata results}, our method outperforms both BM3D and N2F, indicating that DN2N is empirically efficient on a weak denaturation dataset, whose reference image $\bm{y}_0$ has a relatively complicated structure.

The visualization of the predicted images generated by our method is shown in Figure~\ref{fig: append acdc before after}. For example, by zoom-in, we can observe that the prediction of~\eqref{subfig: dn2n-pred-id3} is sufficiently denoised compared to~\eqref{subfig: noisy-id3-t0} and comparable to the clean counterpart~\eqref{subfig: clean-id3-t0}. Furthermore, by comparing~\eqref{subfig: d2s-pred-id3} and~\eqref{subfig: clean-id3-t0} in terms of the remaining noise, we can see that our method performs better for the sub-image than D2S.

\subsection{Expt.3: Performance Evaluation using Cryo-EM Image Dataset}
\label{subsec: expt3}
\subsubsection{Setting}
\label{subsubsec: setting expt3}
We employ a multiframe file containing 16 Cryo-EM images of multiple particles from the dataset called EMPIAR-10028~\cite{Iudin2023-ua}. Let us define the time series raw 16 Cryo-EM images by $\Tilde{\mathcal{D}}=(\bm{y}_0,...,\bm{y}_i,...,\bm{y}_{N})$, where $N=15$, and $i\in\{0\}\cup\mathbb{N}$ is the time index.
According to~\cite{Iudin2023-ua}, all the images are of size 4096$\times$4096, and are contaminated by noise.
Since the original image size is too large for our computational resources to train our model, we firstly select two different small areas (both sizes are 192$\times$192) containing single particle in $\bm{y}_0$, using the average Cryo-EM image $\Bar{\bm{x}} :=\frac{1}{N+1}\sum_{i=0}^{N} \bm{y}_i$.  
The selected two areas in $\bm{y}_0$ are visualized as two yellow squared regions in~\eqref{subfig: ogirinal sub image}; see also zoom-in images for the two areas in~\eqref{subfig: original-top-left} and~\eqref{subfig: original-bottom-right}. The corresponding sub images to~\eqref{subfig: ogirinal sub image} in $\Bar{\bm{x}}$ 
is visualized in~\eqref{subfig: average sub image}. For each Cryo-EM image in $\bm{y}_0,...,\bm{y}_{N}$, we clip two small square areas based on the coordinates of the two yellow squared areas in~\eqref{subfig: ogirinal sub image} to make training datasets of DN2N.

We denoise the two small areas of~\eqref{subfig: original-top-left} and~\eqref{subfig: original-bottom-right} by using our trained model, then conduct visual inspection to evaluate the predictions. For defining $\bm{y}_{\tau_i, j}$ in~\eqref{eq:empirical risk}, we follow the same protocol shown in Section~\ref{subsubsec: expt1 setting}, using a set of the 16 clipped images. 
In addition, we employ TD as the baseline method.

\begin{rmk}
\label{rmk: motivation of expt3}
    An averaged image over raw Cryo-EM images is commonly used as a higher-resolution reference to the raw images, when reconstructing the 3D protein structures by \texttt{RELION}~\cite{scheres2012relion} or \texttt{cryoSPARC}~\cite{punjani2017cryosparc}.
    The goal of Expt.3 is to check whether our method can be an efficient preprocessing technique before the averaging.
    If it is efficient, our method has a potential to enhance 
    quality of the averaged image, leading more accurate 3D reconstruction.
\end{rmk}


\paragraph{Supplementary Information for Figure~\ref{fig: cryowm before after}}
We provide coordinates' information with~\eqref{subfig: ogirinal sub image} in Figure~\ref{fig: cryowm before after}. First, \eqref{subfig: ogirinal sub image} is a sub-image in the original Cryo-EM image at time $0$, i.e., $\bm{y}_0$, whose size is 4096$\times$4096. 
Following the introduction of the coordinates in Section~\ref{subsubsec: expt1 setting}, 
the top-left, bottom-left, bottom-right, and top-right corners' coordinates of $\bm{y}_0$ are $(1,1)$, $(1,4096)$, $(4096, 4096)$, and $(4096,1)$, respectively. 
We define the coordinates of the bottom-left, bottom-right, top-right, and top-left corners of the sub-image~\eqref{subfig: ogirinal sub image} in $\bm{y}_0$ by 
$A,B,C,\;\textrm{and}\;D$, respectively. Moreover, we define the coordinates of the top-left corner point in the top-left (resp. bottom-right) yellow square of~\eqref{subfig: ogirinal sub image} in $\bm{y}_0$ by $E$ (resp. $F$). In this case, $A=(593, 3836), B=(1316, 3836), C=(1316, 2925),D=(593, 2925), E=(641, 2981),\;\textrm{and}\;F=(1085, 3605)$. 

At~\eqref{sub-fig: cryoem-original-top-left-t1} to~\eqref{sub-fig: cryoem-original-bottom-right-t15} in Figure~\ref{fig: training dataset for each expt}, we visualize how the 16 clipped Cryo-EM images are denatured. In the figure, the left to right image in the fourth (resp. fifth) row correspond to the clipped images of top-left (resp. bottom-right) yellow squared area in~\eqref{subfig: ogirinal sub image} with time index $i=1$, $4$, $8$, $12$, and $15$.

\subsubsection{Result and Analysis}

\begin{figure*}[!t]
    \centering
    \begin{minipage}{.25\linewidth}
    \centering
        \begin{subfigure}[b]{\linewidth}
            \includegraphics[width=\textwidth]{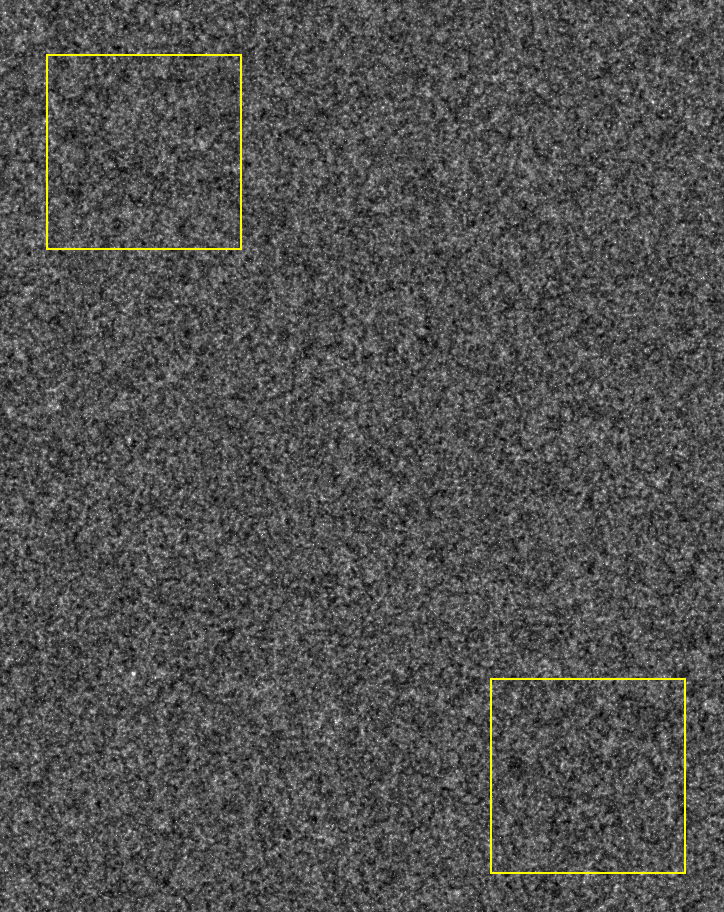}
            \caption{}
            \label{subfig: ogirinal sub image}
        \end{subfigure}
    \end{minipage}
    \begin{minipage}{.25\linewidth}
    \centering
        \begin{subfigure}[b]{\linewidth}
            \includegraphics[width=\textwidth]{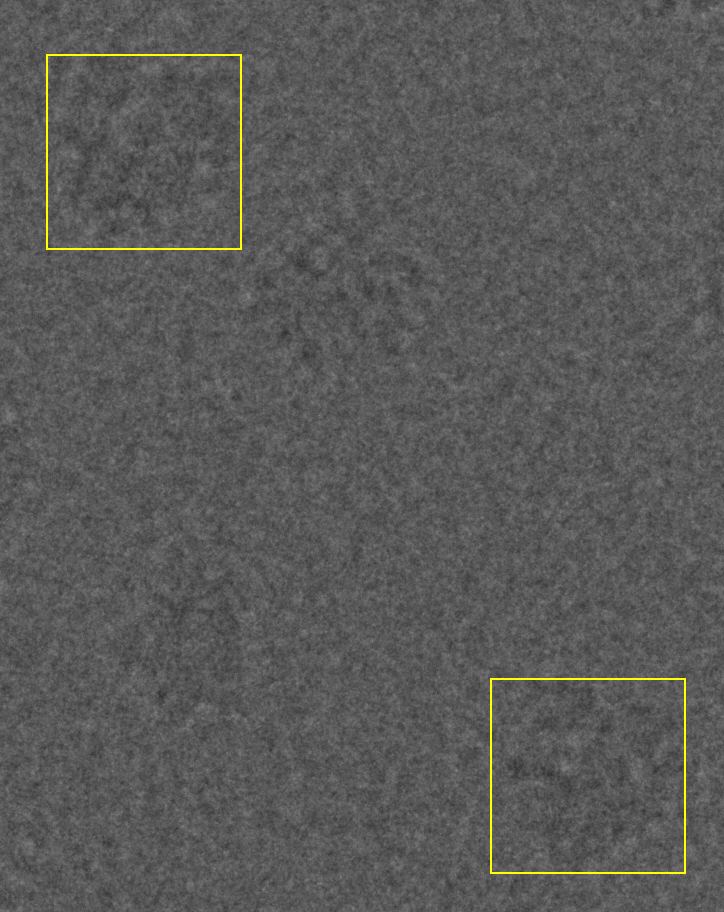}
            \caption{}
            \label{subfig: average sub image}
        \end{subfigure}
    \end{minipage}
    \begin{minipage}{.415\linewidth}
    \centering
        \begin{tabular}{ccc}
                \begin{subfigure}[b]{.325\linewidth}
                \centering
                    \includegraphics[width=0.8\hsize]{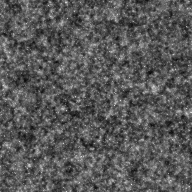}
                    \caption{}
                    \label{subfig: original-top-left}
                \end{subfigure} &
                \begin{subfigure}[b]{.325\linewidth}
                \centering
                    \includegraphics[width=0.8\hsize]{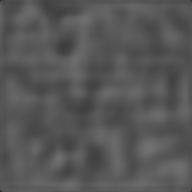}
                    \caption{}
                    \label{subfig: prediction id3}
                \end{subfigure} &
                \begin{subfigure}[b]{.325\linewidth}
                \centering
                    \includegraphics[width=0.8\hsize]{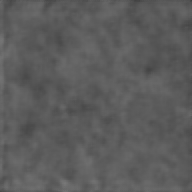}
                    \caption{}
                    \label{subfig: td-pred-top-left}
                \end{subfigure} 
                \\
                \begin{subfigure}[b]{.325\linewidth}
                \centering
                    \includegraphics[width=0.8\hsize]{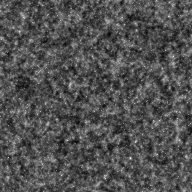}
                    \caption{}
                    \label{subfig: original-bottom-right}
                \end{subfigure} &
                \begin{subfigure}[b]{.325\linewidth}
                \centering
                    \includegraphics[width=0.8\hsize]{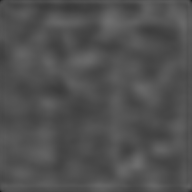}
                    \caption{}
                    \label{subfig: prediction id2}
                \end{subfigure} &
                \begin{subfigure}[b]{.325\linewidth}
                \centering
                    \includegraphics[width=0.8\hsize]{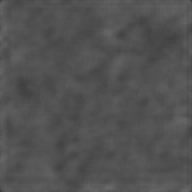}
                    \caption{}
                    \label{subfig: td-pred-bottom-right}
                \end{subfigure}
        \end{tabular}
    \end{minipage}
    \caption{
            Visualized results of Expt.3: application of our method to two small yellow squared areas in~\eqref{subfig: ogirinal sub image}. 
            \eqref{subfig: ogirinal sub image}: 
            Sub-image in a raw Cryo-EM image at time $0$. 
            \eqref{subfig: average sub image}: 
            The corresponding sub-image to~\eqref{subfig: ogirinal sub image} in the averaged Cryo-EM image. 
            \eqref{subfig: original-top-left} and~\eqref{subfig: original-bottom-right}:
            The zoom-in images in top-left and bottom-right yellow squares of~\eqref{subfig: ogirinal sub image}, respectively; both are clipped by \texttt{Fiji}~\cite{Schindelin2012-ua}.
            \eqref{subfig: prediction id3} and~\eqref{subfig: prediction id2}:
            Predicted images by DN2N for~\eqref{subfig: original-top-left} and~\eqref{subfig: original-bottom-right}, respectively.
            \eqref{subfig: td-pred-top-left} and~\eqref{subfig: td-pred-bottom-right}: 
            Predicted images by TD for~\eqref{subfig: original-top-left} and~\eqref{subfig: original-bottom-right}, respectively. 
    }
    \label{fig: cryowm before after} 
\end{figure*}

The results are presented in Figure~\ref{fig: cryowm before after}. Let us treat the average image $\Bar{\bm{x}}$ as a higher-resolution reference for $\bm{y}_0$; see Remark~\ref{rmk: motivation of expt3}. 
First, we can faintly see that each area surrounded by yellow squares in the average sub-image~\eqref{subfig: average sub image} contains a circular particle, whereas we can see almost no particle in the same areas of~\eqref{subfig: ogirinal sub image}.
Next, we can faintly see a circular-shaped particle in the prediction~\eqref{subfig: prediction id3} (resp.~\eqref{subfig: prediction id2}) and the particle is similar to one in the top-left (resp. bottom-right) yellow squared region of~\eqref{subfig: average sub image}.  
Furthermore, the particles in~\eqref{subfig: prediction id3} and~\eqref{subfig: prediction id2} are relatively more visible than the ones in~\eqref{subfig: td-pred-top-left} and~\eqref{subfig: td-pred-bottom-right} obtained through TD.

\subsection{Expt.4: Performance Evaluation using Fluorescence Microscopy Image Dataset}
\label{subsec: expt4}
Fluorescence Microscopy (FM) is an important device in biology, since it has contributed to the understanding of biological structures and functions~\cite{Sanderson01102014}. In this experiment, we employ raw FM images of zebrafish embryos, which have been studied well in drug discovery~\cite{macRae2015zebrafish}. The images were collected by the authors of~\cite{8953965}, and the time series raw 50 FM images used in this experiment can be found in folder ``raw/2'' inside ``Confocal\_FISH.tar'' available at the following URL~\cite{haward2020n2FMdataset}.

\subsubsection{Setting}
\label{subsubsec: setting expt4}
As briefly introduced above, we employ the 50 FM images of zebrafish embryos. Let $\Tilde{\mathcal{D}}=(\bm{y}_0,...,\bm{y}_i,...,\bm{y}_{49})$ denote the raw dataset, where each size is 512$\times$512; see visualization of $\bm{y}_0$ in~\eqref{subfig: mfm-full-noisy-t0}. For each raw image, using \texttt{Fiji}~\cite{Schindelin2012-ua}, we clip the 192$\times$192 sub-image corresponding to the yellow squared area in~\eqref{subfig: mfm-full-noisy-t0}, and define a sequence of the clipped 50 sub-images as training dataset for DN2N. Let $\tilde{\mathcal{D}}^{\mathrm{clip}}$ denote the training dataset.

\paragraph{Supplementary Information for Figure~\ref{fig: mfm before after}}
Let us define the top-left, bottom-left, bottom-right, and top-right corners’ coordinates of $\bm{y}_0$ are $(1, 1)$, $(1, 512)$, $(512, 512)$, and $(512, 1)$, respectively. In the definition, the top-left, bottom-left, bottom-right, and top-right corners’ coordinates of the yellow square in \eqref{subfig: mfm-full-noisy-t0} are $(184,88)$, $(184,279)$, $(375,279)$, $(375,88)$, respectively.

\begin{figure*}[!t]
    \begin{tabular}{ccc}
            \begin{subfigure}[b]{.3\linewidth}
            \centering
                \includegraphics[width=0.65\hsize]{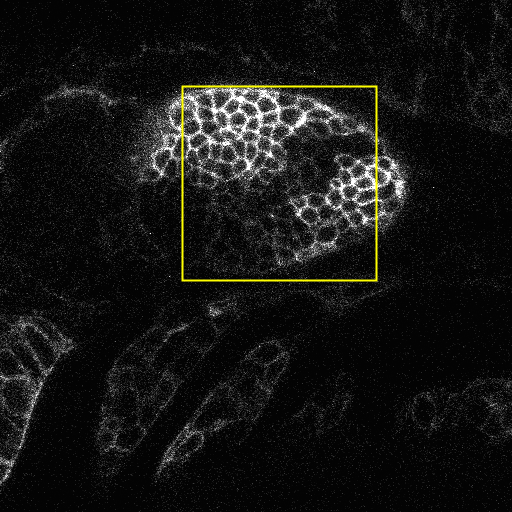}
                \caption{}
                \label{subfig: mfm-full-noisy-t0}
            \end{subfigure} &
            \begin{subfigure}[b]{.3\linewidth}
            \centering
                \includegraphics[width=0.65\hsize]{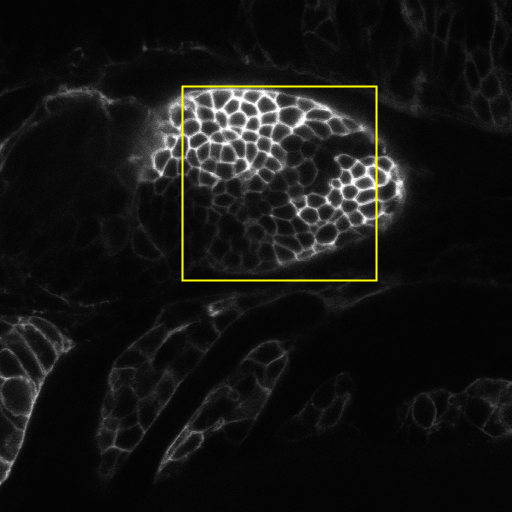}
                \caption{}
                \label{subfig: mfm-full-average}
            \end{subfigure} &
            \begin{subfigure}[b]{.3\linewidth}
            \centering
                \includegraphics[width=0.65\hsize]{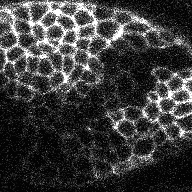}
                \caption{}
                \label{subfig: mfm-clipped-noisy-t0}
            \end{subfigure} \\
            \begin{subfigure}[b]{.3\linewidth}
            \centering
                \includegraphics[width=0.65\hsize]{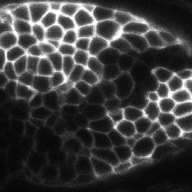}
                \caption{}
                \label{subfig: mfm-clipped-average}
            \end{subfigure} &
            \begin{subfigure}[b]{.3\linewidth}
            \centering
                \includegraphics[width=0.65\hsize]{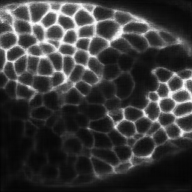}
                \caption{}
                \label{subfig: mfm-dn2n-pred}
            \end{subfigure} &
            \begin{subfigure}[b]{.3\linewidth}
            \centering
                \includegraphics[width=0.65\hsize]{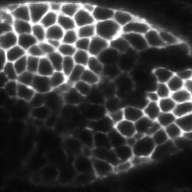}
                \caption{}
                \label{subfig: mfm-td-pred}
            \end{subfigure} 
    \end{tabular}
    \caption{
            Visualized results of Expt.4: application of DN2N to sub-image inside of yellow squared area in~\eqref{subfig: mfm-full-noisy-t0}. 
            \eqref{subfig: mfm-full-noisy-t0}: Raw Fluorescence microscopy image at $t=0$, and the yellow squared area is clipped for Expt.4,
            \eqref{subfig: mfm-full-average}: The corresponding averaged image to \eqref{subfig: mfm-full-noisy-t0}, 
            \eqref{subfig: mfm-clipped-noisy-t0} and~\eqref{subfig: mfm-clipped-average}: The clipped images by \texttt{Fiji} from yellow squared areas of~\eqref{subfig: mfm-full-noisy-t0} and~\eqref{subfig: mfm-full-average}, respectively,
            \eqref{subfig: mfm-dn2n-pred} and \eqref{subfig: mfm-td-pred}: 
            Predicted images output from the models trained by DN2N and TD, respectively, where the input image is~\eqref{subfig: mfm-clipped-noisy-t0}.
    }
    \label{fig: mfm before after} 
\end{figure*}

At~\eqref{sub-fig: mfm-noisy-t0} to~\eqref{sub-fig: mfm-noisy-t49} in Figure~\ref{fig: training dataset for each expt}, we visualize how the dataset $\tilde{\mathcal{D}}^{\mathrm{clip}}$ denatures. The left to right image in the sixth row of the figure correspond to the clipped images with time index $i=1$, $13$, $25$, $38$, and $49$, respectively. 

We conduct visual inspection for predicted image by DN2N. 
For the inspection, inspired by~\cite{8953965}, we use the averaged FM image $\Bar{\bm{x}} :=\frac{1}{50}\sum_{i=0}^{49} \bm{y}_i$ 
as a higher-resolution reference to $\bm{y}_0$; see~\eqref{subfig: mfm-full-average} for visualization of $\Bar{\bm{x}}$. We follow the same protocol of Section~\ref{subsubsec: expt1 setting} to define $\bm{y}_{\tau_i, j}$ in~\eqref{eq:empirical risk} based on $\tilde{\mathcal{D}}^{\mathrm{clip}}$. In addition, for comparison, we conduct the same experiment with TD described in Section~\ref{subsubsec: expt1 setting}.

\subsubsection{Result and Analysis}
Firstly, by comparing the predicted image~\eqref{subfig: mfm-dn2n-pred} of DN2N with the raw counterpart~\eqref{subfig: mfm-clipped-noisy-t0}, it is clear that the noise is significantly reduced overall. As for comparison with the predicted image~\eqref{subfig: mfm-td-pred} by TD, the boundary region with low fluorescence intensity from the top-right to the bottom-left in~\eqref{subfig: mfm-dn2n-pred}, which is informative for understanding the process of membrane formation, is less noisy than~\eqref{subfig: mfm-td-pred}, while the top-left region with high fluorescence intensity in~\eqref{subfig: mfm-td-pred} is slightly clearer than~\eqref{subfig: mfm-dn2n-pred}. Furthermore, by comparing~\eqref{subfig: mfm-dn2n-pred} and the average image~\eqref{subfig: mfm-clipped-average}, the bottom-right region of~\eqref{subfig: mfm-dn2n-pred} with high fluorescence intensity is less noisy than~\eqref{subfig: mfm-clipped-average}, and the boundary region from the center to the bottom-left in~\eqref{subfig: mfm-dn2n-pred} is slightly clearer than~\eqref{subfig: mfm-clipped-average}. Those evidences indicate that DN2N has the potential to positively impact biological analysis, such as how embryos form tissues. 

\begin{rmk}
\label{rmk: pnsr-ssim-expt4}
    Although the higher PSNR/SSIM value between~\eqref{subfig: mfm-dn2n-pred} and~\eqref{subfig: mfm-clipped-average} does not necessarily guarantee that the predicted image by DN2N is useful for biological analysis, we report the results for completeness. Regarding the results for DN2N, the PSNR/SSIM between~\eqref{subfig: mfm-dn2n-pred} and~\eqref{subfig: mfm-clipped-average} are 31.36/0.954. On the other hand, regarding the ones for TD, the PSNR/SSIM between~\eqref{subfig: mfm-td-pred} and~\eqref{subfig: mfm-clipped-average} are 27.98/0.890.
\end{rmk}

\subsection{Implementation and Hyperparameter Tuning}
\label{subsec:implementation}
We implement the algorithms with \texttt{PyTorch}~\cite{paszke2019pytorch}.
For the architecture, we employ the model used in~\cite{lehtinen2018noise}, which is based on the architecture called U-Net~\cite{ronneberger2015unet}.
Here, since our algorithm requires the time variable, we create additional one-channel tensor directly using \texttt{PyTorch}~\cite{paszke2019pytorch}, whose shape is $W\times H$ and each entry is equal to the value of the time variable, and we concatenate it with each $C$-channel training image to produce a $(C+1)$-channel image of shape $W\times H$.
Note that following~\cite{lehtinen2018noise}, we divide the loss function by the number of pixels in each image input to the networks.

\begin{table*}[!t]
\caption{
Results of prediction performance by DN2N with the different hyperparameter values $(\Tilde{\sigma},\mu)$ on the ACDC dataset defined in Expt.2, while fixing $(L,K, n_{\mathrm{epochs}})$ to $(2,100,1000)$.
Note that the result for the selection $(\tilde{\sigma}, \mu) = (75, 10)$ is shown in the last row of Table~\ref{tab:acdc results}.
}
\label{tab:acdc hyperparameter tuning results}
    \centering
    \scalebox{0.8}{
    \begin{tabular}{cccccccccccc}
        \toprule
            \multirow{2}{*}{$\tilde{\sigma}$} & \multirow{2}{*}{$\mu$}  & \multicolumn{2}{c}{ID 1}  & \multicolumn{2}{c}{ID 2}  & \multicolumn{2}{c}{ID 3}  & \multicolumn{2}{c}{ID 4}  & \multicolumn{2}{c}{ID 5} \\ \cline{3-12}
             &     & PSNR$\uparrow$  & SSIM$\uparrow$ & PSNR & SSIM & PSNR & SSIM & PSNR & SSIM & PSNR & SSIM \\ \hline
            \multirow{5}{*}{0}
            & 0 & 22.96 & 0.762 & 24.26 & 0.773 & 24.10 & 0.754 & 21.61 & 0.786 & 22.81 & 0.725 \\
            & 0.1 & 22.94 & 0.779 & 24.03 & 0.760 & 24.46 & 0.773 & 21.69 & 0.789 & 22.71 & 0.711 \\ 
            & 1 & 22.98 & 0.781 & 23.98 & 0.755 & 24.59 & 0.781 & 21.44 & 0.778 & 22.89 & 0.733 \\ 
            & 10 & 23.00 & 0.776 & 24.45 & 0.774 & 25.39 & 0.817 & 21.69 & 0.791 & 22.86 & 0.732  \\ 
            & 100 & 23.10 & 0.813 & 24.97 & 0.792 & 25.55 & 0.837 & 22.00 & 0.800 & 23.01 & 0.758 \\ \hline 
            \multirow{5}{*}{50}
            & 0 & 22.81 & 0.787 & 24.15 & 0.765 & 24.96 & 0.808 & 20.81 & 0.765 & 22.75 & 0.753 \\
            & 0.1 & 23.15 & 0.789 & 24.62 & 0.767 & 24.92 & 0.817 & 20.78 & 0.768 & 22.95 & 0.763 \\ 
            & 1 & 23.01 & 0.795 & 24.28 & 0.763 & 24.82 & 0.825 & 21.26 & 0.778 & 22.79 & 0.744 \\ 
            & 10 & 23.06 & 0.800 & 24.54 & 0.782 & 25.42 & 0.830 & 21.36 & 0.785 & 22.90 & 0.779  \\ 
            & 100 & 22.87 & 0.802 & 24.89 & 0.801 & 25.54 & 0.840 & 21.31 & 0.782 & 22.90 & 0.773 \\ \hline
            \multirow{4}{*}{75}
            & 0 & 22.99 & 0.792 & 23.84 & 0.768 & 25.48 & 0.834 & 20.91 & 0.769 & 22.66 & 0.753 \\
            & 0.1 & 23.10 & 0.792 & 24.34 & 0.780 & 25.51 & 0.838 & 21.02 & 0.775 & 22.85 & 0.766 \\ 
            & 1 & 23.16 & 0.796 & 24.53 & 0.771 & 25.30 & 0.827 & 20.99 & 0.768 & 22.99 & 0.752  \\ 
            & 100 & 22.73 & 0.802 & 24.21 & 0.797 & 25.39 & 0.842 & 21.25 & 0.779 & 22.79 & 0.770  \\ \hline
            \multirow{5}{*}{100}
            & 0 & 23.23 & 0.799 & 24.20 & 0.767 & 25.09 & 0.828 & 21.07 & 0.776 & 22.97 & 0.753 \\
            & 0.1 & 23.23 & 0.802 & 24.16 & 0.770 & 25.39 & 0.834 & 20.96 & 0.775 & 22.94 & 0.751  \\ 
            & 1 & 23.06 & 0.793 & 24.52 & 0.777 & 25.41 & 0.842 & 20.55 & 0.765 & 23.07 & 0.768  \\
            & 10 & 22.80 & 0.801 & 24.07 & 0.785 & 24.97 & 0.839 & 21.23 & 0.780 & 22.64 & 0.771 \\
            & 100 & 22.52 & 0.800 & 24.84 & 0.798 & 25.26 & 0.841 & 20.74 & 0.771 & 22.68 & 0.769 \\
            \bottomrule
        \end{tabular}
    }
\end{table*}

In our method, there are the following main hyperparameters: $\Tilde{\sigma}$, $\mu$, $L$, and $K$. 
The hyperparameter $\Tilde{\sigma}$ represents the standard deviation of the multivariate zero-mean normal distribution, from which the auxiliary noise $\bm{\varepsilon}$ in~\eqref{eq:empirical risk} is sampled. The hyperparameter $\mu$ is used to define $\widehat{\mathcal{L}}_{T}$ in Section~\ref{sec:proposed methodology}. The quantities $L$ and $K$ respectively denote the number of transformations in~\eqref{eq:empirical risk} and the number of reconstructed images used for the prediction $\widehat{x}_{0}$ in~\eqref{eq: prediction formula}. 
We set $(\Tilde{\sigma}, \mu, L, K)$ to $(100, 100, 2, 100)$ in Expt.1, to $(75, 10, 2, 100)$ in Expt.2, to $(75, 10, 2, 100)$ in Expt.3, and to $(75, 100, 2, 100)$ in Expt.4.
The predicting performance of DN2N is robust against change of $(\tilde{\sigma}, \mu)$; see Table~\ref{tab:acdc hyperparameter tuning results}.
For training our model in all experiments, all images are normalized, i.e., each pixel value in an image is divided by 255. We use the optimizer called Adam~\cite{kingma2015adam}, where we set the learning rate to 0.0001.
The mini-batch size is fixed to 4 through all the experiments.
Moreover, we set the number of epochs for training model to 1000 (resp. 900) in Expt.1, Expt.2, and Expt.4 (resp. Expt.3).

We report results of ablation study on DN2N using the same ACDC dataset of Expt.2 in Table~\ref{tab:acdc hyperparameter tuning results}. In the study, we consider two cases: $\mu=0$ and $\tilde{\sigma}=0$. In the first case, we do not use the regularizer $\widehat{\mathcal{L}}_{A}(f)$ of Section~\ref{sec:proposed methodology}, while, in the second case, we do not employ the auxiliary noise $\bm{\varepsilon}$ for both training and prediction. By comparing $\mu=0$ and $\mu \neq 0$ in the table, we can see clear positive effect of $\widehat{\mathcal{L}}_{A}(f)$. Additionally, the SSIM values with $\tilde{\sigma}\neq0$ are overall higher than those with $\tilde{\sigma}=0$, while 
we cannot see the large difference of the PSNR values between the cases with
$\tilde{\sigma}\neq0$  and
$\tilde{\sigma}=0$.

\section{Conclusion}
\label{sec:conclusion}
We theoretically analyze the prediction performance of a self-supervised denoising method on noisy images with denaturation. Based on the analytical result in Theorem~\ref{thm:non-asymptotic theory}, we design DN2N. In our numerical experiments, we observe that the performance of DN2N on the synthetic dataset is consistent with the analytical result. Further, DN2N empirically performs well for MRI, Cryo-EM, and FM image datasets.

We conclude this paper by presenting several interesting future works.
\begin{itemize}
\item We firstly point out that relaxing Assumption~\ref{assumption:assumptions} to more general assumptions is crucial for applications.
\item In addition, constructing more efficient ways to reduce the influence from the hardness of denaturation levels is also important to make full use of denatured images.
\item We comment on the computational efficiency of the algorithm. The algorithm is based on an MSE-based loss function, following~\cite{lehtinen2018noise}.
Hence, it seems that the computational cost for optimizing the loss function is similar to the cost for the least squares problem proposed by~\cite{lehtinen2018noise}.
On the other hand, compared to other methods proposed in~\cite{quan2020self,lequyer2022noise}, our algorithm needs to use more images.
Thus, our algorithm suffers from an issue on the high memory cost to run the algorithm.
In the case that the number of images is small, this issue can be avoided.
However, another problem on the trade-off between the computational aspect and generalization can occur, as shown in Theorem~\ref{thm:non-asymptotic theory}.
Therefore, in practice we need to optimize the number of images, depending on the properties of the given training dataset.
\item Another interesting future work is to investigate whether self-supervised denoising can avoid curse of dimensionality, which is a common issue for large-scaled learning problems in machine learning. We leave this problem as a future work since it is not what we mainly focus on in this paper.
Nevertheless, to resolve the problem, it might be possible to utilize the theory of nonparametric estimation using deep neural networks (see e.g.,~\cite{suzuki2018adaptivity,chen2022nonparametric}).
\end{itemize}

\section*{Data Availability}
Some of our numerical results in Section~\ref{sec:experiments} can be reproduced at URL\footnote{\url{https://doi.org/10.5281/zenodo.14348971}}.

\section*{Acknowledgments}
This work was supported by the FOCUS Establishing Supercomputing Center of Excellence project; MEXT as ``Program for Promoting Researches on the Supercomputer Fugaku'' (Simulation- and AI-driven next-generation medicine and drug discovery based on ``Fugaku'', JPMXP1020230120). This work used computational resources of the supercomputer Fugaku provided by RIKEN Center for Computational Science through the HPCI System Research Projects (Project IDs: hp220078, hp230102, hp230216, hp240109, hp240211,ra000018); the supercomputer system at the information initiative center, Hokkaido University, Sapporo, Japan through the HPCI System Research Projects (Project IDs: hp220078, hp230102). Computational resources of AI Bridging Cloud Infrastructure (ABCI) provided by the National Institute of Advanced Industrial Science and Technology (AIST) was also used. We would like to thank Yasushi Okuno, Akira Nakagawa, and Takashi Kato for their feedback on the draft.

\appendix

\section{Proofs}
\label{appsubsec:proofs}
In this section, we provide all the proofs omitted in the main paper.

\subsection{Proof of Proposition~\ref{thm:guarantee for population minimizer}}
\label{appsubsec:proof of proposition 1}

\begin{proof}
In the first step, we derive the optimal solution of the least-squares problem.
We follow a similar way to Section~2.4 in~\cite{hastie2009elements} for solving the least-squares problem.
Note that
\begin{align*}
\mathbb{E}_{\bm{y},\tau}[\|f(\bm{y}_{0},\tau)-\bm{y}_{\tau}\|_{2}^{2}|\bm{x}_{0}]
=
\mathbb{E}_{\bm{y}_{0},\tau}[\mathbb{E}_{\bm{y}_{\tau}}[\|f(\bm{y}_{0},\tau)-\bm{y}_{\tau}\|_{2}^{2}|\tau,\bm{y}_{0}]|\bm{x}_{0}].
\end{align*}
For a fixed $\tau$, note that $\bm{x}_{0}$ has been also fixed.
Then, it holds that
\begin{align*}
&\mathbb{E}_{\bm{y}_{\tau}}[\|f(\bm{y}_{0},\tau)-\bm{y}_{\tau}\|_{2}^{2}|\tau,\bm{y}_{0}]\\
&=\sum_{i=1}^{d}\mathbb{E}_{\bm{y}_{\tau}}[f_{i}^{2}(\bm{y}_{0},\tau)-2\bm{y}_{\tau,i}f(\bm{y}_{0},\tau)_{i}+\bm{y}_{\tau,i}^{2}|\tau,\bm{y}_{0}]\\
&=\sum_{i=1}^{d}\{(f_{i}(\bm{y}_{0},\tau)-\mathbb{E}_{\bm{y}_{\tau}}[\bm{y}_{\tau,i}|\tau,\bm{y}_{0}])^{2}-(\mathbb{E}_{\bm{y}_{\tau}}[\bm{y}_{\tau,i}|\tau,\bm{y}_{0}])^{2}+\mathbb{E}_{\bm{y}_{\tau}}[\bm{y}_{\tau,i}^{2}|\tau,\bm{y}_{0}]\},
\end{align*}
where $f_{i}(\bm{y}_{0},\tau)$ and $\bm{y}_{\tau,i}$ denote the $i$-th index of the vectors, respectively.
This indicates that $f^{*}(\bm{y}_{0},\tau)=\mathbb{E}_{\bm{y}_{\tau}}[\bm{y}_{\tau}|\tau,\bm{y}_{0}]$ almost surely.

Now, we consider to fix $\bm{y}_{0}\in\mathbb{R}^{d}$ for convenience.
For $t>0$, letting $t\downarrow 0$,
\begin{align*}
    \lim_{t\downarrow 0}f^{*}(\bm{y}_{0},t)
    =\lim_{t\downarrow 0}\phi_{t}(\bm{x}_{0})
    =\bm{x}_{0},
\end{align*}
where we use in the first equality that $\mathbb{E}_{\bm{y}_{t}}[\bm{y}_{t}|\bm{y}_{0}]=\phi_{t}(\bm{x}_{0})$ from the conditions \textbf{\textup{(A)}} and \textbf{\textup{(C)}} in Assumption~\ref{assumption:assumptions}, and in the second one, the condition \textbf{\textup{(B)}} in Assumption~\ref{assumption:assumptions}.
Here, from the assumption, it holds that $\lim_{t\downarrow 0}f^{*}(\bm{y}_{0},t)=f^{*}(\bm{y}_{0},0)$.
Therefore, $f^{*}(\bm{y}_{0},0)=\bm{x}_{0}$ holds almost every $\bm{y}_{0}$ with its probability measure.
This implies that $\mathbb{E}_{\bm{y}_{0}}[f^{*}(\bm{y}_{0},0)|\bm{x}_{0}]=\bm{x}_{0}$, as claimed.
\hfill
\end{proof}

\subsection{Proof of Theorem~\ref{thm:non-asymptotic theory}}
\label{appsubsec:proof of theorem 1}
We note that the following fact holds.
\begin{lem}
\label{lem:uniform boundedness}
Suppose that the conditions in the statement of Theorem~1 hold.
Let $V=\{\bm{v}\in\mathbb{R}^{d}\;|\;\|\bm{v}\|_{2}\leq B_{2}\}$.
Then for any $f\in\mathcal{F}$, it holds that 
$$
\sup_{\bm{u}\in U,\bm{v}\in V,t\in (0,1]}\|f(\bm{u},t)-\bm{v}\|_{2}^{2}\leq L^{2}(B_{1}^{2}+1)+B_{2}^{2}.
$$
\end{lem}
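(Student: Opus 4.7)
The plan is a direct application of the triangle inequality combined with the structural assumptions placed on $\mathcal{F}$ and $V$. The idea is to separately bound $\|f(\bm{u},t)\|_2$ using the anchor condition $f(\bm{0},0)=\bm{0}$ together with the Lipschitz hypothesis, and then fold in $\|\bm{v}\|_2 \leq B_2$.

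First I would exploit the fact that every $f \in \mathcal{F}$ satisfies $f(\bm{0},0)=\bm{0}$, so that for each coordinate $k \in \{1,\ldots,d\}$ and each $(\bm{u},t) \in U\times(0,1]$,
\begin{align*}
|f_k(\bm{u},t)| = |f_k(\bm{u},t) - f_k(\bm{0},0)| \leq L\sqrt{\|\bm{u}\|_2^2 + t^2} \leq L\sqrt{B_1^2 + 1},
\end{align*}
where the last step uses $\bm{u}\in U=[0,1]^d$, which gives $\|\bm{u}\|_2 \leq \sqrt{d}=B_1$, together with $t \leq 1$. Aggregating this coordinate-wise bound yields a uniform bound on $\|f(\bm{u},t)\|_2$ over $U\times(0,1]$.

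Next, I would invoke the definition of $V$, which gives $\|\bm{v}\|_2 \leq B_2$ for every $\bm{v}\in V$. Combining the two bounds via the triangle inequality produces
\begin{align*}
\|f(\bm{u},t)-\bm{v}\|_2 \leq \|f(\bm{u},t)\|_2 + \|\bm{v}\|_2,
\end{align*}
and squaring both sides with an elementary expansion (and, where needed, absorbing cross terms via $2ab \leq a^2+b^2$) gives the claimed bound.

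There is no serious obstacle; the argument is elementary. The only points requiring minor care are how the coordinate-wise Lipschitz bound aggregates into a bound on $\|f(\bm{u},t)\|_2$ and how one keeps the constants in the final squared expression in the stated form, particularly when converting between $(a+b)^2$ and $a^2 + b^2$ style bounds. The supremum is well-defined since both $f$ and the norm are continuous in $t$ and $U\times V$ is bounded.
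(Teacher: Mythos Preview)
Your approach is essentially the same as the paper's: split $\|f(\bm{u},t)-\bm{v}\|_2 \leq \|f(\bm{u},t)\|_2 + \|\bm{v}\|_2$, bound the first term via the anchor $f(\bm{0},0)=\bm{0}$ together with the Lipschitz hypothesis, and the second via the definition of $V$. The paper's proof in fact concludes with $(L\sqrt{B_1^2+1}+B_2)^2$ rather than the $L^2(B_1^2+1)+B_2^2$ displayed in the lemma (the former is the constant $B$ actually used in Theorem~1), so your caution about the $(a+b)^2$ versus $a^2+b^2$ conversion is well placed.
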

\begin{proof}
    First note that
    \begin{align*}
        \sup_{\bm{u}\in U,\bm{v}\in V,t\in (0,1]}\|f(\bm{u},t)-\bm{v}\|_{2}^{2}\leq 
        \left(\sup_{\bm{u}\in U,t\in (0,1]}\|f(\bm{u},t)\|_{2}+\sup_{\bm{v}\in V}\|\bm{v}\|_{2}\right)^{2}.
    \end{align*}
    The second term in the upper bound is further bounded by $B_{2}$ from the condition on $V$.
    For the first part, notice that $f(\bm{0},0)=0$ holds and $f$ is $L$-Lipschitz, we have
    \begin{align*}
        \sup_{\bm{u}\in U,t\in (0,1]}\|f(\bm{u},t)\|_{2}\leq L\sqrt{\|\bm{u}\|_{2}^{2}+|t|^{2}}\leq L\sqrt{B_{1}^{2}+1}.
    \end{align*}
    Hence, we obtain
    \begin{align*}
        \sup_{\bm{u}\in U,\bm{v}\in V,t\in (0,1]}\|f(\bm{u},t)-\bm{v}\|_{2}^{2}\leq \left(L\sqrt{B_{1}^{2}+1}+B_{2}\right)^{2},
    \end{align*}
    which shows the claim.
    \hfill
\end{proof}

Following \cite[Definition~3.2]{mohri2018foundations}, we define a variant of \emph{Rademacher complexity}~\cite{bartlett2002rademacher} as
\begin{align*}
    \mathcal{R}_{j}(\mathcal{F})=\mathbb{E}_{\tau,\sigma}\left[\sup_{f\in\mathcal{F}}\frac{1}{N}\sum_{i=1}^{N}\sigma_{i}\|f(\bm{y}_{0,j},\tau_{i})-\bm{y}_{\tau_{i},j}\|_{2}^{2}\right],
\end{align*}
where $\{\sigma_{i}\}$ are i.i.d. random variables such that $\sigma_{i}=+1$ holds with probability 1/2 and $\sigma_{i}=-1$ holds with probability 1/2, and the expectation is taken for all random variables except $\bm{y}$.
To prove Theorem~1, we need the following inequality, whose proof is based on \cite[Theorem~3.3]{mohri2018foundations}.
\begin{lem}
\label{lem:uniform convergence for u-statistics}
In the setting described in the statement of Theorem~1, with probability at least $1-2\delta$ where $\delta\geq 0$, it holds that
\begin{align*}
    &\frac{1}{MN}\sum_{j=1}^{M}\sum_{i=1}^{N}\|f(\bm{y}_{0,j},\tau_{i})-\bm{y}_{\tau_{i},j}\|_{2}^{2}\\
    &\leq \mathbb{E}\left[\|f(\bm{y}_{0},\tau_{i})-\bm{y}_{\tau_{i}}\|_{2}^{2}\right]+2\frac{1}{M}\sum_{j=1}^{M}\mathcal{R}_{j}(\mathcal{F})\\
    &\quad+\sqrt{\frac{2B^{2}\log(M/\delta)}{N}}+\sqrt{\frac{2B^{2}\log(1/\delta)}{M}}.
\end{align*}
\end{lem}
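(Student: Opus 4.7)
The plan is to decompose the deviation between the double-empirical average and the population risk into two pieces matching the two layers of randomness in the sampling scheme: conditional on $\bm{y}_{0,j}$ the pairs $\{(\tau_i, \bm{y}_{\tau_i,j})\}_{i=1}^N$ are i.i.d.\ across $i$, while the outer anchors $\bm{y}_{0,1},\ldots,\bm{y}_{0,M}$ are i.i.d.\ across $j$. Writing $R_j(f):=\mathbb{E}[\|f(\bm{y}_{0,j},\tau)-\bm{y}_{\tau,j}\|_2^2\mid\bm{y}_{0,j}]$ for the conditional population loss at the $j$-th anchor, decompose
\begin{align*}
\frac{1}{MN}\sum_{j,i}\|f(\bm{y}_{0,j},\tau_i)-\bm{y}_{\tau_i,j}\|_2^2 - \mathbb{E}[\|f(\bm{y}_0,\tau)-\bm{y}_\tau\|_2^2] = \mathrm{(I)} + \mathrm{(II)},
\end{align*}
where $\mathrm{(I)}=\tfrac{1}{M}\sum_j[\tfrac{1}{N}\sum_i\|f(\bm{y}_{0,j},\tau_i)-\bm{y}_{\tau_i,j}\|_2^2 - R_j(f)]$ is the inner deviation and $\mathrm{(II)}=\tfrac{1}{M}\sum_j R_j(f) - \mathbb{E}[R_1(f)]$ is the outer deviation; bound them separately and combine by a union bound.

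For $\mathrm{(I)}$, fix $j$ and condition on $\bm{y}_{0,j}$. By Lemma~\ref{lem:uniform boundedness} the integrand $\|f(\bm{y}_{0,j},\tau_i)-\bm{y}_{\tau_i,j}\|_2^2$ is bounded by $B$, so changing a single pair $(\tau_i,\bm{y}_{\tau_i,j})$ perturbs the inner average by at most $B/N$. The standard symmetrization-plus-McDiarmid argument (e.g., Theorem~3.3 of~\cite{mohri2018foundations}) applied to the i.i.d.\ inner sample yields, uniformly in $f\in\mathcal{F}$ and with conditional probability at least $1-\delta/M$,
\begin{align*}
\tfrac{1}{N}\sum_i\|f(\bm{y}_{0,j},\tau_i)-\bm{y}_{\tau_i,j}\|_2^2 - R_j(f) \leq 2\mathcal{R}_j(\mathcal{F}) + \sqrt{\tfrac{2B^2\log(M/\delta)}{N}}.
\end{align*}
A union bound over $j=1,\ldots,M$ and averaging over $j$ then bound $\mathrm{(I)}$ by $\tfrac{2}{M}\sum_j \mathcal{R}_j(\mathcal{F}) + \sqrt{2B^2\log(M/\delta)/N}$ with unconditional probability at least $1-\delta$, since the conditional failure probability is uniformly $\delta/M$ in $\bm{y}_{0,j}$.

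For $\mathrm{(II)}$, observe that $\{R_j(f)\}_{j=1}^M$ are i.i.d.\ and each takes values in $[0,B]$ by Lemma~\ref{lem:uniform boundedness}. Hoeffding's inequality then gives $\mathrm{(II)}\leq\sqrt{2B^2\log(1/\delta)/M}$ with probability at least $1-\delta$. A final union bound on the events from (I) and (II) yields the claimed total failure probability of $2\delta$.

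The main obstacle is reconciling the uniform-in-$f$ behaviour between the two stages: the bound on $\mathrm{(I)}$ holds uniformly over $\mathcal{F}$, whereas Hoeffding on $\mathrm{(II)}$ applies for a fixed $f$. This is acceptable provided the lemma is subsequently invoked at an $f$ whose dependence on the data respects the i.i.d.\ structure of $R_j(f)$ across $j$; otherwise an additional symmetrization on the $j$-scale would be required. A secondary bookkeeping point is pinning the exact constants in the bounded-differences inequality so that the $\sqrt{2B^2\log(M/\delta)/N}$ and $\sqrt{2B^2\log(1/\delta)/M}$ terms emerge as stated after the $\delta\mapsto\delta/M$ substitution from the union bound over~$j$.
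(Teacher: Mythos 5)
Your proposal is correct and takes essentially the same route as the paper's proof: an inner uniform-convergence step via Theorem~3.3 of~\cite{mohri2018foundations} with the trajectory randomness held fixed (the paper fixes a realization of the functional $\bm{y}_{j}$ where you condition on the anchor), a union bound over $j$ after substituting $\delta\mapsto\delta/M$, an outer Hoeffding bound on the $M$ conditional risks, and a final union bound giving failure probability $2\delta$. Your caveat about the outer Hoeffding step holding only for a fixed $f$ is consistent with how the paper actually uses the lemma, namely at the deterministic population-risk minimizer $f^{*}$ before invoking the ERM property.
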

The proof of Lemma~\ref{lem:uniform convergence for u-statistics} can be found in~\ref{appsubsec:proof of lemma uniform convergence for u-statistics}.

In the proof of Theorem~1, we upper bound the following quantity.
\begin{lem}
\label{lem:rademacher complexity evaluation}
Suppose that the setting in Theorem~1 holds.
Then, there exists a constant $C>0$ such that we have
\begin{align*}
    \mathbb{E}_{\tau,\sigma}\left[\sup_{f\in\mathcal{F}}\frac{1}{N}\sum_{i=1}^{N}\sigma_{i}\|f(\bm{y}_{0,j},\tau_{i})\|_{2}\right]\leq C\frac{1}{\sqrt{N}}.
\end{align*}
\end{lem}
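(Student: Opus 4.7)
The plan is to reduce the quantity to the Rademacher complexity of a class of scalar Lipschitz functions on $[0,1]$ and then apply Dudley's entropy integral. Fix $j \in \{1,\ldots,M\}$ and condition on $\bm{y}_{0,j} \in U = [0,1]^{d}$, so that $\|\bm{y}_{0,j}\|_{2} \leq B_{1}$. For each $f \in \mathcal{F}$, define the scalar function $g_{f}: [0,1] \to \mathbb{R}$ by
\[
g_{f}(t) \;=\; \|f(\bm{y}_{0,j}, t)\|_{2},
\]
and let $\mathcal{G} = \{g_{f} : f \in \mathcal{F}\}$. From the coordinate-wise $L$-Lipschitz assumption on $f = (f_{1},\ldots,f_{d})$, the reverse triangle inequality gives $|g_{f}(t_{1}) - g_{f}(t_{2})| \leq \|f(\bm{y}_{0,j}, t_{1}) - f(\bm{y}_{0,j}, t_{2})\|_{2} \leq L\sqrt{d}\,|t_{1}-t_{2}|$, so each $g_{f}$ is $L\sqrt{d}$-Lipschitz on $[0,1]$. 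Combining Lipschitzness with $f(\bm{0},0) = \bm{0}$ and $\|\bm{y}_{0,j}\|_{2}^{2} + 1 \leq B_{1}^{2}+1$ yields a uniform bound $\|g_{f}\|_{\infty} \leq L\sqrt{d(B_{1}^{2}+1)} =: B'$. Thus the desired expectation equals the empirical Rademacher complexity of $\mathcal{G}$ at the sample $\tau_{1},\ldots,\tau_{N}$ drawn i.i.d.\ from a distribution on $[0,1]$.

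Next I would invoke Dudley's entropy integral (see Chapter~3 of~\cite{mohri2018foundations}) in the form
\[
\mathbb{E}_{\sigma}\!\left[\sup_{g\in\mathcal{G}} \frac{1}{N}\sum_{i=1}^{N} \sigma_{i}\,g(\tau_{i})\right] \;\leq\; \frac{C_{0}}{\sqrt{N}} \int_{0}^{B'} \sqrt{\log \mathcal{N}(\epsilon, \mathcal{G}, \|\cdot\|_{\infty})}\; d\epsilon,
\]
with an absolute constant $C_{0} > 0$ (any $L^{2}(P_{N})$ covering is dominated by an $L^{\infty}$ covering). It is a classical metric-entropy estimate that the class of $L\sqrt{d}$-Lipschitz functions on $[0,1]$ with sup-norm bounded by $B'$ admits a sup-norm covering number satisfying $\log \mathcal{N}(\epsilon,\mathcal{G},\|\cdot\|_{\infty}) \leq C_{1}\,L\sqrt{d}/\epsilon$ for some absolute $C_{1} > 0$; this can be seen by considering piecewise-linear interpolants at a grid of $[0,1]$ of spacing $\epsilon/(L\sqrt{d})$ and then quantizing the nodal values to a $\epsilon$-grid of $[-B', B']$. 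Substituting this estimate produces
\[
\int_{0}^{B'} \sqrt{\frac{C_{1}\,L\sqrt{d}}{\epsilon}}\; d\epsilon \;=\; 2\sqrt{C_{1}\,L\sqrt{d}\,B'},
\]
a finite constant depending only on $L$, $d$, and $B_{1}$. Hence the conditional Rademacher complexity is $\leq C/\sqrt{N}$, uniformly in $\bm{y}_{0,j}$. Since the bound does not depend on $\bm{y}_{0,j}$, taking the outer expectation over $\bm{y}_{0,j}$ preserves it, which yields the claim.

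The routine portion of the argument is the Dudley bound and the integral computation. The main technical care lies in making sure that the induced scalar class $\mathcal{G}$ really inherits both the desired Lipschitz constant and the sup-norm bound from the assumptions on $\mathcal{F}$, and in writing the covering-number estimate with constants that do not blow up with $N$. Both points are straightforward once one works coordinate-wise, so I expect no essential obstruction beyond bookkeeping of dimensional constants.
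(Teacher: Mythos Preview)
Your proof is correct and follows essentially the same chaining strategy as the paper, but the packaging differs. The paper rescales $g_{f}$ to a $1$-Lipschitz class and then invokes a result of von Luxburg and Bousquet (their Lemma~\ref{citelem:von lemma}, a special case of \cite[Theorem~18]{von2004distance}) that bounds the Rademacher complexity of $1$-Lipschitz functions on a metric space $\mathcal{S}$ directly in terms of the covering number of the \emph{domain} $\mathcal{S}=[0,1]$; the integrand becomes $\sqrt{\mathcal{N}([0,1],u/2,\rho)+\log(2/u)}\asymp u^{-1/2}$ and the integral is finite. You instead apply Dudley's entropy integral with sup-norm covering numbers of the \emph{function class} $\mathcal{G}$, and then plug in the classical Kolmogorov--Tihomirov entropy estimate $\log\mathcal{N}(\epsilon,\mathcal{G},\|\cdot\|_{\infty})\lesssim L\sqrt{d}/\epsilon$ for bounded Lipschitz functions on $[0,1]$. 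The two routes meet at the same $\int\epsilon^{-1/2}\,d\epsilon$ computation and the same $N^{-1/2}$ rate; the paper's version is a single citation to a tailored lemma, while yours is more self-contained and makes the dependence on $L$ and $d$ explicit throughout. Either is fine here.
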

The proof of Lemma~\ref{lem:rademacher complexity evaluation} is deferred to~\ref{appsubsec:proof of lemma rademacher complexity evaluation}.
We also need the following lemma.
\begin{lem}
\label{lem:rademacher complexity evaluation version 2}
In the setting of Theorem~1, there is a positive constant $C$ such that we have
\begin{align*}
    \mathbb{E}_{\tau,\sigma}\left[\sup_{f\in\mathcal{F}}\frac{1}{N}\sum_{i=1}^{N}\sigma_{i}\langle f(\bm{y}_{0,j},\tau_{i}),\bm{y}_{\tau_{i},j}\rangle_{2}\right]\leq C\frac{1}{\sqrt{N}}.
\end{align*}
\end{lem}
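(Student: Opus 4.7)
The plan is to reduce the bound to a scalar Lipschitz Rademacher average that mirrors Lemma~\ref{lem:rademacher complexity evaluation}, via coordinate-wise decomposition followed by Talagrand's contraction principle to strip off the random factor $\bm{y}_{\tau_{i},j}$. This is a natural strategy because the inner product $\langle f(\bm{y}_{0,j},\tau_{i}),\bm{y}_{\tau_{i},j}\rangle_{2}$ differs from $\|f(\bm{y}_{0,j},\tau_{i})\|_{2}$ (the quantity handled in Lemma~\ref{lem:rademacher complexity evaluation}) only by the presence of the bounded random factor $\bm{y}_{\tau_{i},j}$, which can be absorbed by contraction.

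First, I would write $\langle f(\bm{y}_{0,j},\tau_{i}),\bm{y}_{\tau_{i},j}\rangle_{2} = \sum_{k=1}^{d} f_{k}(\bm{y}_{0,j},\tau_{i})\,\bm{y}_{\tau_{i},j,k}$, where $\bm{y}_{\tau_{i},j,k}$ denotes the $k$-th coordinate. Since $f\in\mathcal{F}$ corresponds to an independent choice of $d$ scalar components, each belonging to the class $\mathcal{F}^{(1)} := \{g:U\times[0,1]\to\mathbb{R}\mid g(\bm{0},0)=0,\ g \text{ is } L\text{-Lipschitz}\}$, the supremum over $f$ splits as a sum of $d$ Rademacher averages, one per coordinate $k$. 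Next, I would condition on $(\tau,\bm{y})$ and apply the Ledoux-Talagrand contraction principle (see, e.g., \cite{mohri2018foundations}) to the linear maps $\phi_{i}^{(k)}(x) := \bm{y}_{\tau_{i},j,k}\cdot x$. These satisfy $\phi_{i}^{(k)}(0)=0$ and are uniformly $B_{2}$-Lipschitz since $|\bm{y}_{\tau_{i},j,k}|\leq \|\bm{y}_{\tau_{i},j}\|_{2}\leq B_{2}$ almost surely. The contraction then yields, pointwise in $(\tau,\bm{y})$,
\begin{align*}
\mathbb{E}_{\sigma}\sup_{g\in\mathcal{F}^{(1)}} \frac{1}{N}\sum_{i=1}^{N} \sigma_{i}\, g(\bm{y}_{0,j},\tau_{i})\,\bm{y}_{\tau_{i},j,k} \ \leq\ 2B_{2}\,\mathbb{E}_{\sigma}\sup_{g\in\mathcal{F}^{(1)}} \frac{1}{N}\sum_{i=1}^{N} \sigma_{i}\, g(\bm{y}_{0,j},\tau_{i}).
\end{align*}
Taking expectation over $\tau$ and summing the resulting $d$ bounds, the problem reduces to showing that the scalar Rademacher complexity $\mathbb{E}_{\tau,\sigma}\sup_{g\in\mathcal{F}^{(1)}} \frac{1}{N}\sum_{i}\sigma_{i} g(\bm{y}_{0,j},\tau_{i})$ is of order $N^{-1/2}$. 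Since $\mathcal{F}^{(1)}$ has the same Lipschitz and anchoring structure as the vector class treated in Lemma~\ref{lem:rademacher complexity evaluation}, and enjoys the uniform bound $\sup_{g\in\mathcal{F}^{(1)}}\|g\|_{\infty}\leq L\sqrt{B_{1}^{2}+1}$, I would invoke the same covering-number / Dudley-integral chaining argument used for that lemma to inherit the $N^{-1/2}$ rate. Combining with the $2B_{2}$ factor from contraction and summing over $k=1,\dots,d$ produces the claimed bound $C/\sqrt{N}$ with $C$ depending on $d,L,B_{1},B_{2}$.

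The main obstacle is the correct invocation of contraction: the Lipschitz coefficient $|\bm{y}_{\tau_{i},j,k}|$ varies with $i$ and is itself random, so care is needed when conditioning on $(\tau,\bm{y})$ and passing from the coefficient-weighted Rademacher average to the plain one via the uniform upper bound $B_{2}$. A secondary but minor point is verifying that the scalar Lipschitz Rademacher complexity satisfies the same $N^{-1/2}$ rate as Lemma~\ref{lem:rademacher complexity evaluation}; this is routine because the proof of that lemma uses only the Lipschitz and boundedness properties that are shared by $\mathcal{F}$ and its scalar counterpart $\mathcal{F}^{(1)}$.
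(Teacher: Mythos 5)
Your proposal is correct and follows essentially the same route as the paper's proof: decompose the inner product coordinate-wise, strip off the bounded factor $\bm{y}_{\tau_{i},j,k}$ (you do this via Talagrand's contraction with the $B_{2}$-Lipschitz multiplication maps, while the paper rescales by $B_{2}^{-1}$ and invokes monotonicity of the empirical Rademacher complexity), and then bound the resulting scalar Lipschitz Rademacher complexity by the same covering-number/chaining argument as in Lemma~\ref{lem:rademacher complexity evaluation} to get the $N^{-1/2}$ rate. The differences are cosmetic; the extra factor $2$ from your version of contraction is simply absorbed into the constant $C$.
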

The proof of Lemma~\ref{lem:rademacher complexity evaluation version 2} can be found in~\ref{appsubsec:proof of lemma rademacher complexity evaluation}.

\begin{proof}[Proof of Theorem~1]
We first remark that throughout this proof, the clean image $\bm{x}_{0}\in\mathbb{R}^{d}$ is fixed.
Using the triangle inequality, it holds that
\begin{align}
    &\left\|\widehat{\bm{x}}_{0}-\bm{x}_{0}\right\|_{2}^{2}\nonumber\\
    &=\left\|\widehat{\bm{x}}_{0}-\frac{1}{MN}\sum_{j=1}^{M}\sum_{i=1}^{N}\widehat{f}(\bm{y}_{0,j},\tau_{i})+\frac{1}{MN}\sum_{j=1}^{M}\sum_{i=1}^{N}\widehat{f}(\bm{y}_{0,j},\tau_{i})-\bm{x}_{0}\right\|_{2}^{2}\nonumber\\
    \label{eq:proof theorem 1 eq 1}
    &\leq 2\left\|\frac{1}{M}\sum_{j=1}^{M}\widehat{f}(\bm{y}_{0,j},0)-\frac{1}{MN}\sum_{j=1}^{M}\sum_{i=1}^{N}\widehat{f}(\bm{y}_{0,j},\tau_{i})\right\|_{2}^{2}\\
    &\quad+2\left\|\frac{1}{MN}\sum_{j=1}^{M}\sum_{i=1}^{N}\widehat{f}(\bm{y}_{0,j},\tau_{i})-\bm{x}_{0}\right\|_{2}^{2}\nonumber.
\end{align}
The first term in~\eqref{eq:proof theorem 1 eq 1} is upper bounded as
\begin{equation}
\label{eq:proof theorem 1 eq 2}
    \begin{split}
        &\left\|\frac{1}{M}\sum_{j=1}^{M}\widehat{f}(\bm{y}_{0,j},0)-\frac{1}{MN}\sum_{j=1}^{M}\sum_{i=1}^{N}\widehat{f}(\bm{y}_{0,j},\tau_{i})\right\|_{2}^{2}\\
        &\leq \frac{1}{MN}\sum_{j=1}^{M}\sum_{i=1}^{N}\left\|\widehat{f}(\bm{y}_{0,j},0)-\widehat{f}(\bm{y}_{0,j},\tau_{i})\right\|_{2}^{2}\\
        &\leq L^{2}\cdot\frac{1}{N}\sum_{i=1}^{N}\tau_{i}^{2},
    \end{split}
\end{equation}
where in the first inequality we use the triangle inequality, and in the second line we utilize the Lipschitz continuity of $\widehat{f}$ as assumed in the statement.
Here $\tau_{i}(\omega)\in (0,1]$ for every $\omega\in\Omega$ from the definition. By Hoeffding's inequality (see e.g., Theorem~D.2 in~\cite{mohri2018foundations}), for the mean $\frac{1}{N}\sum_{i=1}^{N}\tau_{i}^{2}$ we have with probability at least $1-\delta$,
\begin{align}
\label{eq:proof theorem 1 eq 2.5}
\frac{1}{N}\sum_{i=1}^{N}\tau_{i}^{2}\leq \mathbb{E}[\tau^{2}]+\sqrt{\frac{\log(1/\delta)}{2N}}.
\end{align}
By \eqref{eq:proof theorem 1 eq 2} and \eqref{eq:proof theorem 1 eq 2.5}, with probability at least $1-\delta$ we have
\begin{align}
\label{eq:proof theorem 1 eq 3}
    \left\|\frac{1}{M}\sum_{j=1}^{M}\widehat{f}(\bm{y}_{0,j},0)-\frac{1}{MN}\sum_{j=1}^{M}\sum_{i=1}^{N}\widehat{f}(\bm{y}_{0,j},\tau_{i})\right\|_{2}^{2}\leq L^{2}\cdot \left(\mathbb{E}[\tau^{2}]+\sqrt{\frac{\log(1/\delta)}{2N}}\right).
\end{align}

The second term in~\eqref{eq:proof theorem 1 eq 1} is evaluated as follows:
\begin{align}
\label{eq:proof theorem 1 eq 4}
    &\left\|\frac{1}{MN}\sum_{j=1}^{M}\sum_{i=1}^{N}\widehat{f}(\bm{y}_{0,j},\tau_{i})-\bm{x}_{0}\right\|_{2}^{2}\nonumber\\
    &\leq 
    2\left\|\frac{1}{MN}\sum_{j=1}^{M}\sum_{i=1}^{N}\widehat{f}(\bm{y}_{0,j},\tau_{i})\!-\!\frac{1}{MN}\sum_{j=1}^{M}\sum_{i=1}^{N}\bm{y}_{\tau_{i},j}\right\|_{2}^{2}\nonumber\\
    &\quad+2\left\|\frac{1}{MN}\sum_{j=1}^{M}\sum_{i=1}^{N}\bm{y}_{\tau_{i},j}-\bm{x}_{0}\right\|_{2}^{2}\nonumber\\
    &\leq \frac{2}{MN}\sum_{j=1}^{M}\sum_{i=1}^{N}\left\|\widehat{f}(\bm{y}_{0,j},\tau_{i})-\bm{y}_{\tau_{i},j}\right\|_{2}^{2}+2\left\|\frac{1}{MN}\sum_{j=1}^{M}\sum_{i=1}^{N}\bm{y}_{\tau_{i},j}-\bm{x}_{0}\right\|_{2}^{2}.
\end{align}
Let $B=\left(L\sqrt{B_{1}^{2}+1}+B_{2}\right)^{2}$.
By Lemma~\ref{lem:uniform boundedness} and Lemma~\ref{lem:uniform convergence for u-statistics}, the following inequality holds with probability at least $1-2\delta$
\begin{equation}
\label{eq:proof theorem 1 eq 6}
    \begin{split}
        &\frac{1}{MN}\sum_{j=1}^{M}\sum_{i=1}^{N}\left\|f^{*}(\bm{y}_{0,j},\tau_{i})-\bm{y}_{\tau_{i},j}\right\|_{2}^{2}\\
        &\leq \mathbb{E}[\|f^{*}(\bm{y}_{0},\tau)-\bm{y}_{\tau}\|_{2}^{2}]+\frac{2}{M}\sum_{j=1}^{M}\mathcal{R}_{j}(\mathcal{F})+\sqrt{\frac{B^{2}\log(M/\delta)}{2N}}+\sqrt{\frac{2B^{2}\log(1/\delta)}{2M}}.
    \end{split}
\end{equation}
Since $\widehat{f}$ is the empirical risk minimizer, \eqref{eq:proof theorem 1 eq 6} yields the following inequality that holds with probability at least $1-2\delta$:
\begin{equation}
\label{eq:proof theorem 1 eq 6.2}
    \begin{split}
        &\frac{1}{MN}\sum_{j=1}^{M}\sum_{i=1}^{N}\left\|\widehat{f}(\bm{y}_{0,j},\tau_{i})-\bm{y}_{\tau_{i},j}\right\|_{2}^{2}\\
        &\leq \mathbb{E}[\|f^{*}(\bm{y}_{0},\tau)-\bm{y}_{\tau}\|_{2}^{2}]+\frac{2}{M}\sum_{j=1}^{M}\mathcal{R}_{j}(\mathcal{F})+\sqrt{\frac{B^{2}\log(M/\delta)}{2N}}+\sqrt{\frac{B^{2}\log(1/\delta)}{2M}}.
    \end{split}
\end{equation}
Here we rearrange $\mathcal{R}(\mathcal{F})$ to obtain a bound,
\begin{equation}
\label{eq:proof theorem 1 eq 7}
    \begin{split}
        &\mathcal{R}_{j}(\mathcal{F})\\
        &=\mathbb{E}_{\tau,\sigma}\left[\sup_{f\in\mathcal{F}}\frac{1}{N}\sum_{i=1}^{N}\sigma_{i}\left(\|f(\bm{y}_{0,j},\tau_{i})\|_{2}^{2}-2\langle f(\bm{y}_{0,j},\tau_{i}),\bm{y}_{\tau_{i}}\rangle_{2}+\|\bm{y}_{\tau_{i}}\|_{2}^{2}\right)\right]\\
        &\leq\!
        \mathbb{E}_{\tau,\sigma}\!\left[\!\sup_{f\in\mathcal{F}}\frac{1}{N}\!\sum_{i=1}^{N}\!\sigma_{i}\!\left(\|f(\bm{y}_{0,j},\tau_{i})\|_{2}^{2}\!+\!\|\bm{y}_{\tau_{i}}\|_{2}^{2}\right)\!+\!2\sup_{f\in\mathcal{F}}\frac{1}{N}\!\sum_{i=1}^{N}\!-\sigma_{i}\langle f(\bm{y}_{0,j},\tau_{i}),\bm{y}_{\tau_{i}}\rangle_{2}\!\right]\\
        &=\!
        \mathbb{E}_{\tau,\sigma}\!\left[\!\sup_{f\in\mathcal{F}}\frac{1}{N}\sum_{i=1}^{N}\!\sigma_{i}\left(\|f(\bm{y}_{0,j},\tau_{i})\|_{2}^{2}\!+\!\|\bm{y}_{\tau_{i}}\|_{2}^{2}\right)\!+\!2\sup_{f\in\mathcal{F}}\frac{1}{N}\sum_{i=1}^{N}\sigma_{i}\langle f(\bm{y}_{0,j},\tau_{i}),\bm{y}_{\tau_{i}}\rangle_{2}\!\right]\!,
    \end{split}
\end{equation}
where we note that in \eqref{eq:proof theorem 1 eq 7} we utilize the obvious fact that $\sigma_{i}$ and $-\sigma_{i}$ follow the same distribution.
Since $\mathbb{E}_{\sigma_{i}}[\sigma_{i}]=0$ from the definition of $\sigma_{i}$, we have
\begin{equation}
\label{eq:proof theorem 1 eq 7.1}
    \begin{split}
        &\mathbb{E}_{\tau,\sigma}\left[\sup_{f\in\mathcal{F}}\frac{1}{N}\sum_{i=1}^{N}\sigma_{i}\left(\|f(\bm{y}_{0,j},\tau_{i})\|_{2}^{2}+\|\bm{y}_{\tau_{i}}\|_{2}^{2}\right)\right]\\
        &\leq \mathbb{E}_{\tau,\sigma}\left[\sup_{f\in\mathcal{F}}\frac{1}{N}\sum_{i=1}^{N}\sigma_{i}\|f(\bm{y}_{0,j},\tau_{i})\|_{2}^{2}\right].
    \end{split}
\end{equation}
Then, by Talagland's contraction inequality (see e.g., Lemma~5.7 in~\cite{mohri2018foundations}) to the first term in the right-hand-side of~\eqref{eq:proof theorem 1 eq 7.1}, it holds that
\begin{equation}
\label{eq:proof theorem 1 eq 7.2}
    \begin{split}
        &\mathbb{E}_{\tau,\sigma}\left[\sup_{f\in\mathcal{F}}\frac{1}{N}\sum_{i=1}^{N}\sigma_{i}\|f(\bm{y}_{0,j},\tau_{i})\|_{2}^{2}\right]\\
        &\leq 2L\sqrt{B_{1}^{2}+1}\cdot \mathbb{E}_{\tau,\sigma}\left[\sup_{f\in\mathcal{F}}\frac{1}{N}\sum_{i=1}^{N}\sigma_{i}\|f(\bm{y}_{0,j},\tau_{i})\|_{2}\right].
    \end{split}
\end{equation}

By Lemma~\ref{lem:rademacher complexity evaluation}, Lemma~\ref{lem:rademacher complexity evaluation version 2}, \eqref{eq:proof theorem 1 eq 4}, \eqref{eq:proof theorem 1 eq 6.2}, \eqref{eq:proof theorem 1 eq 7}, \eqref{eq:proof theorem 1 eq 7.1}, and \eqref{eq:proof theorem 1 eq 7.2}, with probability at least $1-2\delta$ we have
\begin{equation}
\label{eq:proof theorem 1 eq 8}
    \begin{split}
        &\frac{1}{MN}\sum_{j=1}^{M}\sum_{i=1}^{N}\left\|\widehat{f}(\bm{y}_{0,j},\tau_{i})-\bm{y}_{\tau_{i},j}\right\|_{2}^{2}\\
        &\leq \mathbb{E}[\|f^{*}(\bm{y}_{0},\tau)-\bm{y}_{\tau}\|_{2}^{2}]+4C(L\sqrt{B_{1}^{2}+1}+1)\frac{1}{\sqrt{N}}\\
        &\quad+3\sqrt{\frac{B^{2}\log(M/\delta)}{2N}}+\sqrt{\frac{B^{2}\log(1/\delta)}{2M}}.
    \end{split}
\end{equation}

Next, we upper bound the second term in~\eqref{eq:proof theorem 1 eq 4} as
\begin{align}
    &\left\|\frac{1}{MN}\sum_{j=1}^{M}\sum_{i=1}^{N}\bm{y}_{\tau_{i},j}-\bm{x}_{0}\right\|_{2}^{2}\nonumber\\
    &\leq 2\left\|\frac{1}{MN}\sum_{j=1}^{M}\sum_{i=1}^{N}\bm{y}_{\tau_{i},j}-\mathbb{E}[\bm{y}_{\tau}|\bm{x}_{0}]\right\|_{2}^{2}
    +\frac{2}{MN}\sum_{j=1}^{M}\sum_{i=1}^{N}\|\mathbb{E}[\bm{y}_{\tau_{i},j}|\bm{x}_{0}]-\bm{x}_{0}\|_{2}^{2}\nonumber\\
    \label{eq:proof theorem 1 eq 9}
    &=2\left\|\frac{1}{MN}\sum_{j=1}^{M}\sum_{i=1}^{N}\bm{y}_{\tau_{i},j}-\mathbb{E}[\bm{y}_{\tau}|\bm{x}_{0}]\right\|_{2}^{2}+\frac{2}{MN}\sum_{j=1}^{M}\sum_{i=1}^{N}\|\mathbb{E}_{\tau_{i}}[\phi_{\tau_{i}}(\bm{x}_{0})]-\bm{x}_{0}\|_{2}^{2}\\
    \label{eq:proof theorem 1 eq 10}
    &=2\left\|\frac{1}{MN}\sum_{j=1}^{M}\sum_{i=1}^{N}\bm{y}_{\tau_{i},j}-\mathbb{E}[\bm{y}_{\tau}|\bm{x}_{0}]\right\|_{2}^{2}+2\|\mathbb{E}_{\tau}[\phi_{\tau}(\bm{x}_{0})]-\bm{x}_{0}\|_{2}^{2}\\
    \label{eq:proof theorem 1 eq 10.5}
    &\leq 2\left\|\frac{1}{MN}\sum_{j=1}^{M}\sum_{i=1}^{N}\bm{y}_{\tau_{i},j}-\mathbb{E}[\bm{y}_{\tau}|\bm{x}_{0}]\right\|_{1}^{2}+2\|\mathbb{E}_{\tau}[\phi_{\tau}(\bm{x}_{0})]-\bm{x}_{0}\|_{2}^{2},
\end{align}
where in \eqref{eq:proof theorem 1 eq 9} we use the condition~\textbf{(A)} in Assumption~1.
Here we note the condition in the statement that $\|\bm{y}_{\tau_{i},j}\|_{2}\leq B_{2}$ holds almost surely.
Here note that by Jensen's inequality
\begin{align}
\label{eq:proof theorem 1 eq 10.525}
    \left|\frac{1}{MN}\sum_{j=1}^{M}\sum_{i=1}^{N}\bm{y}_{\tau_{i},j,k}-\mathbb{E}[\bm{y}_{\tau,k}|\bm{x}_{0}]\right|\leq 
    \frac{1}{M}\sum_{j=1}^{M}\left|\frac{1}{N}\sum_{i=1}^{N}\bm{y}_{\tau_{i},j,k}-\mathbb{E}[\bm{y}_{\tau,j,k}|\bm{x}_{0}]\right|.
\end{align}
By Hoeffding's inequality (see e.g., Theorem~D.2 in~\cite{mohri2018foundations}), with probability at least $1-2(Md)^{-1}\delta$ it holds that
\begin{align}
\label{eq:proof theorem 1 eq 10.55}
\left|\frac{1}{N}\sum_{i=1}^{N}\bm{y}_{\tau_{i},j,k}-\mathbb{E}[\bm{y}_{\tau,j,k}|\bm{x}_{0}]\right|\leq \sqrt{\frac{B_{2}^{2}\log(2Md/\delta)}{2N}}.
\end{align}
By~\eqref{eq:proof theorem 1 eq 10.525} and \eqref{eq:proof theorem 1 eq 10.55}, with probability at least $1-2\delta$ it holds that
\begin{align}
\label{eq:proof theorem 1 eq 10.555}
\left\|\frac{1}{MN}\sum_{j=1}^{M}\sum_{i=1}^{N}\bm{y}_{\tau_{i},j}-\mathbb{E}[\bm{y}_{\tau}|\bm{x}_{0}]\right\|_{1}^{2}\leq \frac{d^{2}B_{2}^{2}\log(2Md/\delta)}{2N}.
\end{align}

Finally, combining \eqref{eq:proof theorem 1 eq 1}, \eqref{eq:proof theorem 1 eq 3}, \eqref{eq:proof theorem 1 eq 4}, \eqref{eq:proof theorem 1 eq 8}, \eqref{eq:proof theorem 1 eq 10.5}, and \eqref{eq:proof theorem 1 eq 10.555}, with probability at least $1-4\delta$, we have
\begin{align*}
    \|\widehat{\bm{x}}_{0}-\bm{x}_{0}\|_{2}^{2}&\leq
    2L^{2}\mathbb{E}[\tau^{2}]+2\mathbb{E}[\|f^{*}(\bm{y}_{0},\tau)-\bm{y}_{\tau}\|_{2}^{2}]+2\|\mathbb{E}_{\tau}[\phi_{\tau}(\bm{x}_{0})]-\bm{x}_{0}\|_{2}^{2}\\
    &\quad
    +2L^{2}\sqrt{\frac{\log(1/\delta)}{2N}}+4C(L\sqrt{B_{1}^{2}+1}+1)\frac{1}{\sqrt{N}}\\
    &\quad\quad+6\sqrt{\frac{B^{2}\log(M/\delta)}{2N}}+2\sqrt{\frac{B^{2}\log(1/\delta)}{2M}}+\frac{d^{2}B_{2}^{2}\log(2Md/\delta)}{2N}
    ,
\end{align*}
which shows the claim, and the proof is completed.
\end{proof}

\subsection{Proof of Lemma~\ref{lem:uniform convergence for u-statistics}}
\label{appsubsec:proof of lemma uniform convergence for u-statistics}
\begin{proof}[Proof of Lemma~\ref{lem:uniform convergence for u-statistics}]    
We first focus on each of the components in the decomposition
\begin{align}
\label{eq:stat bound eq 1}
\frac{1}{MN}\sum_{j=1}^{M}\sum_{i=1}^{N}\|f(\bm{y}_{0,j},\tau_{i})-\bm{y}_{\tau_{i},j}\|_{2}^{2}
=
\frac{1}{M}\sum_{j=1}^{M}\left(\frac{1}{N}\sum_{i=1}^{N}\|f(\bm{y}_{0,j},\tau_{i})-\bm{y}_{\tau_{i},j}\|_{2}^{2}\right).
\end{align}
In the following, we treat $\bm{y}_{\tau_{i},j}$ as a functional of $\tau_{i}$, namely we fix the randomness of $\bm{y}_{t,j}$ by considering a realization of it.
Here to avoid any confusion, we write $\bm{y}_{j}(\tau_{i})$ instead to express that it is a functional.
Then, the sum $N^{-1}\sum_{i=1}^{N}\|f(\bm{y}_{0,j},\tau_{i})-\bm{y}_{\tau_{i},j}\|_{2}^{2}$ is a mean of the function $\|f(\bm{y}_{0},\tau)-\bm{y}_{\tau}\|_{2}^{2}$ with i.i.d. random variables $(\tau_{1},\cdots,\tau_{N})$.
Thus, by Theorem~3.3 in~\cite{mohri2018foundations}, with probability at least $1-\delta/M$ we have
\begin{equation}
\label{eq:stat bound eq 2}
    \begin{split}
        &\frac{1}{N}\sum_{i=1}^{N}\|f(\bm{y}_{0,j},\tau_{i})-\bm{y}_{j}(\tau_{i})\|_{2}^{2}\\
        &\leq \mathbb{E}_{\tau}\left[\|f(\bm{y}_{0,j},\tau_{i})-\bm{y}_{j}(\tau_{i})\|_{2}^{2}\right]+2\mathcal{R}_{j}(\mathcal{F})+\sqrt{\frac{B^{2}\log(M/\delta)}{2N}}.
    \end{split}
\end{equation}
Since \eqref{eq:stat bound eq 2} holds for every $j\in\{1,\cdots,M\}$, by taking the union bound, with probability at least $1-\delta$ we have
\begin{equation}
\label{eq:stat bound eq 3}
    \begin{split}
        &\frac{1}{MN}\sum_{j=1}^{M}\sum_{i=1}^{N}\|f(\bm{y}_{0,i},\tau_{i})-\bm{y}_{j}(\tau_{i})\|_{2}^{2}\\
        &\leq \frac{1}{M}\sum_{j=1}^{M}\mathbb{E}_{\tau}\left[\|f(\bm{y}_{0,j},\tau_{i})-\bm{y}_{j}(\tau_{i})\|_{2}^{2}\right]+\frac{2}{M}\sum_{j=1}^{M}\mathcal{R}_{j}(\mathcal{F})+\sqrt{\frac{B^{2}\log(M/\delta)}{2N}}.
    \end{split}
\end{equation}
The equation \eqref{eq:stat bound eq 3} holds for any possible functional $\bm{y}_{j}$ in this problem setting.
By Hoeffding's inequality (see e.g., Theorem~D.2 in~\cite{mohri2018foundations}), with probability at least $1-\delta$ it holds that
\begin{align}
\label{eq:stat bound eq 4}
&\frac{1}{M}\sum_{j=1}^{M}\mathbb{E}_{\tau}\left[\|f(\bm{y}_{0,j},\tau_{i})-\bm{y}_{j}(\tau_{i})\|_{2}^{2}\right]\nonumber\\
&\leq \mathbb{E}_{\bm{y},\tau}[\|f(\bm{y}_{0,j},\tau_{i})-\bm{y}_{j}(\tau_{i})\|_{2}^{2}]+\sqrt{\frac{B^{2}\log(1/\delta)}{2M}}.
\end{align}
Combining \eqref{eq:stat bound eq 3} and \eqref{eq:stat bound eq 4}, we obtain the claim.
\end{proof}

\subsection{Proofs of Lemma~\ref{lem:rademacher complexity evaluation} and Lemma~\ref{lem:rademacher complexity evaluation version 2}}
\label{appsubsec:proof of lemma rademacher complexity evaluation}

The proof of Lemma~\ref{lem:rademacher complexity evaluation} uses the following useful facts.
Let $N_{0}\in\mathbb{N}$ be an arbitrary integer, and let $\mathcal{S}$ be an arbitrary measurable space.
Let $r_{1},\cdots,r_{N_{0}}$ be i.i.d. $\mathcal{S}$-valued random variables drawn from some probability distribution, and let $\mathcal{G}$ be a set of real-valued functions on $\mathcal{S}$.
In \cite[Definition~3.1]{mohri2018foundations}, a version of \emph{empirical Rademacher complexity}~\cite{bartlett2002rademacher} is defined as
\begin{align}
\label{eq:empirical rademacher complexity}
\widehat{\mathcal{R}}(\mathcal{G})&:=\mathbb{E}_{\sigma}\left[\sup_{g\in\mathcal{G}}\frac{1}{N_{0}}\sum_{i=1}^{N_{0}}\sigma_{i}g(r_{i})\right].
\end{align}
Denote $\|g\|_{N_{0}}^{2}=\frac{1}{N_{0}}\sum_{i=1}^{N_{0}}g^{2}(r_{i})$.
Also, $\mathcal{N}(\mathcal{G},u,\|\cdot\|_{N_{0}})$ denotes the standard covering number of $\mathcal{G}$ with ball coverings of radius $u$ in the norm $\|\cdot\|_{N_{0}}$ (see e.g., Definition~4.2.2 in \cite{vershynin2018high} for covering numbers).

It is shown by Theorem~18 in~\cite{von2004distance} that the following fact holds:
\begin{lem}[Special case of Theorem~18 in~\cite{von2004distance}]
\label{citelem:von lemma}
Let $\mathcal{S}=[0,1]$, and let $\rho$ be the standard Euclidean distance in $\mathcal{S}$.
Let $\widetilde{\mathcal{G}}$ be a set of real-valued functions on $\mathcal{S}$ such that every element is $1$-Lipschitz with respect to the metric $\rho$.
Then, there exists a constant $C>0$ independent of $N_{0}$ such that it holds that
\begin{align*}
\widehat{\mathcal{R}}(\widetilde{\mathcal{G}})\leq C\inf_{c>0}\left\{c+\frac{1}{\sqrt{N_{0}}}\int_{c/4}^{2\kappa(\mathcal{S})}\sqrt{\mathcal{N}(\mathcal{S},u/2,\rho)+\log \left(\frac{2\kappa(\mathcal{S})}{u}+1\right)}du\right\},
\end{align*}
where $\kappa(\mathcal{S})$ denotes the diameter of $\mathcal{S}$.
\end{lem}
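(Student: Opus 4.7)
The statement is the specialization of the Dudley-type bound in \cite{von2004distance} to $\mathcal{S}=[0,1]$ with the Euclidean metric, so the plan is to outline the two ingredients that feed the cited chaining argument: a standard Dudley entropy integral and a covering-number estimate for $1$-Lipschitz functions on a totally bounded metric space. For any family of real-valued functions $\widetilde{\mathcal{G}}$ that is uniformly bounded on the sample points, the generic Dudley bound gives a universal constant $C_{0}$ with
$$
\widehat{\mathcal{R}}(\widetilde{\mathcal{G}})\;\le\;C_{0}\inf_{c>0}\left\{c\;+\;\frac{1}{\sqrt{N_{0}}}\int_{c/4}^{D}\sqrt{\log \mathcal{N}(\widetilde{\mathcal{G}},u,\|\cdot\|_{N_{0}})}\,du\right\},
$$
where $D$ is the $\|\cdot\|_{N_{0}}$-diameter of $\widetilde{\mathcal{G}}$. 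Since $\widehat{\mathcal{R}}(\widetilde{\mathcal{G}})$ is invariant under adding constants to each $g$ (because $\mathbb{E}[\sigma_{i}]=0$ forces $\sum_{i}\sigma_{i}(g(r_{i})+a)=\sum_{i}\sigma_{i}g(r_{i})$), I may normalize to $g(r_{1})=0$; $1$-Lipschitzness together with $\kappa(\mathcal{S})=1$ then forces $\|g\|_{\infty}\le\kappa(\mathcal{S})$ and $D\le 2\kappa(\mathcal{S})$, matching the integration range in the stated bound.

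Next, I would control $\log \mathcal{N}(\widetilde{\mathcal{G}},u,\|\cdot\|_{N_{0}})\le\log \mathcal{N}(\widetilde{\mathcal{G}},u,\|\cdot\|_{\infty})$ via a Kolmogorov--Tikhomirov construction. Take a $u/2$-net $s_{1}<s_{2}<\cdots<s_{M}$ of $\mathcal{S}$ of cardinality $M=\mathcal{N}(\mathcal{S},u/2,\rho)$. By $1$-Lipschitzness, any $g\in\widetilde{\mathcal{G}}$ is determined up to sup-norm error $u/2$ by its values on the net, and piecewise-linear interpolation between these values stays $1$-Lipschitz. Quantize $g(s_{1})$ on a uniform grid of mesh $u/2$, giving at most $\lceil 4\kappa(\mathcal{S})/u+1\rceil$ choices; then for each $i\ge 1$, Lipschitzness confines $g(s_{i+1})$ to an interval of length at most $u$ around the chosen value of $g(s_{i})$, so a further grid of mesh $u/2$ yields only $O(1)$ choices. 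Multiplying, the total number of quantized interpolants is at most $(4\kappa(\mathcal{S})/u+1)\cdot 3^{M-1}$, and each $g$ lies within sup-norm $u$ of one of them. Taking logarithms,
$$
\log \mathcal{N}(\widetilde{\mathcal{G}},u,\|\cdot\|_{\infty})\;\le\;C_{1}\bigl(\mathcal{N}(\mathcal{S},u/2,\rho)\;+\;\log(2\kappa(\mathcal{S})/u+1)\bigr).
$$
Substituting this into Dudley's integral and absorbing $\sqrt{C_{1}}$ into the outer constant $C$ yields the stated inequality, with the infimum over $c>0$ preserved.

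The main obstacle is producing the additive combination $\mathcal{N}(\mathcal{S},u/2,\rho)+\log(2\kappa(\mathcal{S})/u+1)$ inside the square root, rather than the looser product $\mathcal{N}(\mathcal{S},u/2,\rho)\cdot\log(2\kappa(\mathcal{S})/u+1)$ obtained by discretizing each $g(s_{i})$ independently; the key observation is that once $g(s_{1})$ has been placed on the coarse grid, each successive coordinate is Lipschitz-tethered to its predecessor and admits only an $O(1)$ refinement, which converts the exponent $M\log(1/u)$ into $\log(1/u)+O(M)$. Everything else is standard: checking that the diameter bound, the translation-invariance reduction, and the monotonicity $\|\cdot\|_{N_{0}}\le\|\cdot\|_{\infty}$ are compatible with the endpoints of the Dudley integral, and finally adjusting the absolute constant $C$ so as to recover exactly the form stated in the lemma.
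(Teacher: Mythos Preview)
The paper does not prove this lemma at all: it is stated purely as a citation, namely the specialization to $\mathcal{S}=[0,1]$ of Theorem~18 in \cite{von2004distance}, followed only by the remark that the original result covers arbitrary totally bounded metric spaces. There is therefore no ``paper's own proof'' to compare against.

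Your proposal goes further and sketches the actual argument behind the cited theorem. The two ingredients you identify---a Dudley entropy integral for the empirical Rademacher complexity, and a Kolmogorov--Tikhomirov covering-number bound for $1$-Lipschitz functions on a totally bounded space---are exactly the ones used in \cite{von2004distance}. Your normalization step (using translation invariance of $\widehat{\mathcal{R}}$ to pin $g(r_1)=0$ and bound the diameter by $2\kappa(\mathcal{S})$) and your explanation of why the covering bound yields the additive form $\mathcal{N}(\mathcal{S},u/2,\rho)+\log(2\kappa(\mathcal{S})/u+1)$ rather than a product are both correct and match the reasoning in the source. So your sketch is a faithful reconstruction of the proof the paper is quoting, not an alternative route.
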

Note that \cite{von2004distance} show a more general case where $\mathcal{S}$ is a totally bounded metric space in the theorem.

Using the above fact, we now prove Lemma~\ref{lem:rademacher complexity evaluation} and Lemma~\ref{lem:rademacher complexity evaluation version 2}.
\begin{proof}[Proof of Lemma~\ref{lem:rademacher complexity evaluation}]
    Define the function class $\mathcal{G}=\{g(t)=\|f(\bm{y}_{0,j},t)\|_{2}|f\in\mathcal{F}\}$.
    We rescale every function $g$ in $\mathcal{G}$ to define
    \begin{align*}
        \widetilde{\mathcal{G}}=\left\{\left.\widetilde{g}(t)=L^{-1}d^{-1}g(t)\right|g\in\mathcal{G}\right\}.
    \end{align*}
    where $\bm{u}\in U$, and $t\in[0,1]$.
    Denote $f=(f_{1},\cdots,f_{d})$ for convenience. Here, we notice that for any $\widetilde{g}\in\widetilde{\mathcal{G}}$,
    \begin{align}
        |\widetilde{g}(t)-\widetilde{g}(t')|&\leq L^{-1}d^{-1}\|f(\bm{y}_{0,j},t)-f(\bm{y}_{0,j},t')\|_{2} \nonumber\\
        \label{eq:norm bound}
        &\leq |t-t'|,
    \end{align}
    where in~\eqref{eq:norm bound} we use the $L$-Lipschitzness of $f$.
    Therefore, by Lemma~\ref{citelem:von lemma} due to~\cite{von2004distance}, there exists some constant $C>0$ independent of $N$ such that
    \begin{align}
        \label{eq:von luxburg chaining bound}
            \widehat{\mathcal{R}}(\mathcal{G})
            \leq LC\inf_{c>0}\left\{c+\frac{1}{\sqrt{N}}\int_{c/4}^{2}\sqrt{\mathcal{N}([0,1],u/2,\|\cdot\|_{2})+\log 2u^{-1}}du\right\}.
    \end{align}

    In the remaining part, we follow a similar way to Example~4 in~\cite{von2004distance}.
    By Corollary 4.2.13 in~\cite{vershynin2018high}, for the quantity $\mathcal{N}([0,1],u/2,\|\cdot\|_{2})$ there exists a constant $C>0$ independent of $u$ such that
    \begin{align}
        \label{eq:euclidean ball bound}
        \mathcal{N}([0,1],u/2,\|\cdot\|_{2})\leq \frac{C}{u}.
    \end{align}
    We can bound the following integral:
    \begin{align}
    \label{eq:integral bound}
            \int_{c/4}^{2}\sqrt{\frac{1}{u}+\log\left(\frac{2}{u}\right)}du
            &\leq \int_{c/4}^{2}\sqrt{\frac{1}{u}}du+\int_{c/4}^{2}\sqrt{\log\left(\frac{2}{u}\right)}du\nonumber\\
            &\lesssim 1,
    \end{align}
    where the notation $a_{1}\lesssim a_{2}$ ($a_{1},a_{2}\in\mathbb{R}$) means that $a_{1}\leq C a_{2}$ for some constant $C>0$.
    Combining \eqref{eq:von luxburg chaining bound}, \eqref{eq:euclidean ball bound}, \eqref{eq:integral bound}, we have
    \begin{align*}
        \widehat{\mathcal{R}}(\mathcal{G})\lesssim \inf_{c>0}\left\{c+\frac{1}{\sqrt{N}}\right\}.
    \end{align*}
    Choosing as $c=1/\sqrt{N}$, we obtain $\widehat{\mathcal{R}}(\mathcal{G})\lesssim 1/\sqrt{N}$.
    This bound is the displayed inequality of this lemma.
\end{proof}

\begin{proof}[Proof of Lemma~\ref{lem:rademacher complexity evaluation version 2}]
The proof of Lemma~\ref{lem:rademacher complexity evaluation version 2} is almost the same as that of Lemma~\ref{lem:rademacher complexity evaluation}.
In fact, since $\|\bm{y}_{\tau_{i},j}\|_{2}$ is bounded from the assumption, we have
\begin{align*}
    &\mathbb{E}_{\tau,\sigma}\left[\sup_{f\in\mathcal{F}}\frac{1}{N}\sum_{i=1}^{N}\sigma_{i}\langle f(\bm{y}_{0,j},\tau_{i}),\bm{y}_{\tau_{i},j}\rangle_{2}\right]\\
    &\leq B_{2}\sum_{k=1}^{d}\mathbb{E}_{\tau,\sigma}\left[\sup_{f_{k}\in\mathcal{F}_{k}}\frac{1}{N}\sum_{i=1}^{N}\sigma_{i}(B_{2}^{-1}\bm{y}_{\tau_{i},j,k})f_{k}(\bm{y}_{0,j},\tau_{i})\right],
\end{align*}
where $\mathcal{\mathcal{F}}_{k}=\{f_{k}|f=(f_{1},\cdots,f_{k},\cdots,f_{d})\in\mathcal{F}\}$, and $\bm{y}_{\tau_{i},j}=(\bm{y}_{\tau_{i},j,1},\cdots,\bm{y}_{\tau_{i},j,d})$.
Since $B_{2}^{-1}\bm{y}_{\tau_{i},j,k}\leq 1$ from the assumption that $\|\bm{y}_{\tau_{i},j}\|_{2}\leq B_{2}$ holds almost surely, the rescaled function $(B_{2}^{-1}\bm{y}_{\tau_{i},j,k})f_{k}(\cdot,\cdot)$ is also $L$-Lipschitz continuous.
By Theorem~12 in~\cite{bartlett2002rademacher} which shows the monotonicity of $\widehat{\mathcal{R}}$, we have
\begin{align*}
    \mathbb{E}_{\sigma}\left[\sup_{f_{k}\in\mathcal{F}_{k}}\frac{1}{N}\sum_{i=1}^{N}\sigma_{i}(B_{2}^{-1}\bm{y}_{\tau_{i},j,k})f_{k}(\bm{y}_{0,j},\tau_{i})\right]\leq \widehat{\mathcal{R}}(\mathcal{F}_{k}).
\end{align*}
Therefore, by Lemma~\ref{citelem:von lemma} due to~\cite{von2004distance}, Corollary~4.2.13 in~\cite{vershynin2018high}, and some calculation based on Example~4 in~\cite{von2004distance}, as in the proof of Lemma~\ref{lem:rademacher complexity evaluation}, we obtain the claim.
\end{proof}

\section{Details of Numerical Experiments by N2N and TD}
\label{append: further expt} 
For implementation of Noise2Noise (N2N)~\cite{lehtinen2018noise}, we need to define a noisy input image $\bm{y}$ and the noisy target $\bm{y}^\prime$. In our preliminary experiments, we defined the pair $(\bm{y}, \bm{y}^\prime)$ based on the GitHub code~\cite{lehtinen2018n2nGit}. However, the prediction results were not satisfactory; see Table~\ref{tab:toydata results of naive n2n}, Table~\ref{tab:acdc results of navie n2n}, and Figure~\ref{fig: mfm and cryoem before after by naive n2n}. The reason is likely the limited number of training images. Therefore, inspired by Topaz-Denoise (TD)~\cite{bepler2020topaz}, which empirically performs well for limited amount of training images, we define the pair in Section~\ref{subsubsec: expt1 setting} to satisfy the following condition: $|t - t^\prime| = 1$, where $t$ and $t'$ denote the time indexes of $\bm{y}$ and $\bm{y}^\prime$, respectively. 

\begin{table*}[h]
\caption{
Predicting performance by N2N on toy image datasets used in Expt.1.
}
\label{tab:toydata results of naive n2n}
    \centering
    \scalebox{0.8}{
        \begin{tabular}{ccccccccc}
        \toprule
             \multirow{2}{*}{$(\lambda,\sigma)$} & \multicolumn{2}{c}{$(25,10)$}  & \multicolumn{2}{c}{$(10,10)$}  & \multicolumn{2}{c}{$(25,25)$}  & \multicolumn{2}{c}{$(10,25)$}   \\ \cline{2-9}
             & PSNR$\uparrow$ & SSIM$\uparrow$ & PSNR & SSIM & PSNR & SSIM & PSNR & SSIM \\ \hline
            Slow & 18.82 & 0.841 & 18.25 & 0.775 & 18.51 & 0.856 & 19.10 & 0.879 \\
            Fast & 22.69 & 0.905 & 18.04 & 0.849 & 20.29 & 0.854 & 21.72 & 0.843 \\
            \bottomrule
        \end{tabular}
    }
\end{table*}
\begin{table*}[h]
\caption{
    Predicting performance by N2N on ACDC image datasets used in Expt.2. 
}
\label{tab:acdc results of navie n2n}
    \centering
    \scalebox{0.8}{
        \begin{tabular}{cccccccccc}
        \toprule
             \multicolumn{2}{c}{ID 1}  & \multicolumn{2}{c}{ID 2}  & \multicolumn{2}{c}{ID 3}  & \multicolumn{2}{c}{ID 4}  & \multicolumn{2}{c}{ID 5} \\ \cline{1-10}
             PSNR$\uparrow$ & SSIM$\uparrow$ & PSNR & SSIM & PSNR & SSIM & PSNR & SSIM & PSNR & SSIM \\ \hline
             19.40 & 0.641 & 20.89 & 0.633 & 20.65 & 0.650 & 16.12 & 0.575 & 19.32 & 0.599 \\
            \bottomrule
        \end{tabular}
    }
\end{table*}
\begin{figure*}[h]
    \begin{tabular}{ccc}
            \begin{subfigure}[b]{.3\linewidth}
            \centering
                \includegraphics[width=0.6\hsize]{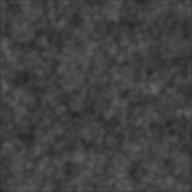}
                \caption{}
                \label{subfig: cryoem-top-left-naive-n2n-pred}
            \end{subfigure} &
            \begin{subfigure}[b]{.3\linewidth}
            \centering
                \includegraphics[width=0.6\hsize]{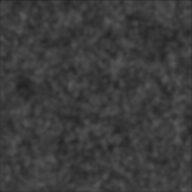}
                \caption{}
                \label{subfig: cryoem-bottom-right-naive-n2n-pred}
            \end{subfigure} &
            \begin{subfigure}[b]{.3\linewidth}
            \centering
                \includegraphics[width=0.6\hsize]{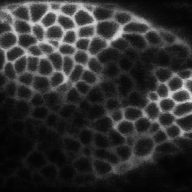}
                \caption{}
                \label{subfig: mfm-naive-n2n-pred}
            \end{subfigure} 
    \end{tabular}
    \caption{
    Visualization of predicted Cryo-EM and FM images by N2N.
    \eqref{subfig: cryoem-top-left-naive-n2n-pred} and~\eqref{subfig: cryoem-bottom-right-naive-n2n-pred}: The predicted Cryo-EM images corresponding to \eqref{subfig: original-top-left} and~\eqref{subfig: original-bottom-right}, respectively,
    \eqref{subfig: mfm-naive-n2n-pred}: The predicted FM image corresponding to~\eqref{subfig: mfm-clipped-noisy-t0}.
    }
    \label{fig: mfm and cryoem before after by naive n2n} 
\end{figure*}

\bibliographystyle{unsrtnat}
\bibliography{allref_arxiv}

\end{document}